\renewcommand{\section}{\@startsection{section}{1}{0pt}{-12pt}{5pt}{\large\bf}}
\renewcommand{\subsection}{\@startsection{subsection}{2}{0pt}{-12pt}{-5pt}{\normalsize\bf}}
\renewcommand{\subsubsection}{\@startsection{subsubsection}{3}{0pt}{-12pt}{-5pt}{\normalsize\bf}}
\def\nnewcolor{1}
\newtheorem{theorem}{Theorem}[section]
\newtheorem{proposition}{Proposition}[section]
\newtheorem{lemma}{Lemma}[section]
\newtheorem{corollary}{Corollary}[section]
\newtheorem{definition}{Definition}[section]
\newtheorem{observation}{Observation}[section]
\newcommand{\flattened}{{\mathrm{flat}}}
\newcommand{\R}{\mathbb{R}}
\newcommand{\E}{\mathbb{E}}
\newcommand{\dtv}{d_{\mathrm TV}}
\newcommand{\dk}{d_{\mathrm K}}
\newcommand{\wh}[1]{{\widehat{#1}}}
\newcommand{\rnote}[1]{\footnote{{\bf [[Rocco: {#1}\bf ]] }}}
\newcommand{\snote}[1]{\footnote{{\bf [[Siuon: {#1}\bf ]] }}}
\newcommand{\ignore}[1]{}
\newcommand{\eps}{\varepsilon}
\newcommand{\norm}[1]{\left\|#1\right\|}
\newcommand{\partit}{\mathcal}
\renewcommand{\eqref}[1]{Eq.~(\ref{#1})}
\newcommand{\eqdef}{\stackrel{{\mathrm {\footnotesize def}}}{=}}
\newcommand{\littlesum}{\mathop{\textstyle \sum}}
\newcommand{\littleprod}{\mathop{\textstyle \prod}}
\newtheorem{fact}[theorem]{Fact}
\newenvironment{algorithm}[1][\  ] %
{ \rm
\begin{tabbing}
....\=.....\=.....\=.....\=.....\=  \+ \kill
} %
{\end{tabbing} }
\title{Learning mixtures of structured distributions over discrete domains}
\author{Siu-On Chan\thanks{Supported by NSF award 
DMS-1106999, DOD ONR grant N000141110140 and NSF award CCF-1118083.}\\
UC Berkeley \\
{\tt siuon@cs.berkeley.edu}.\\
\and
Ilias Diakonikolas\thanks{This work was done while the author was at UC Berkeley supported by a Simons Postdoctoral Fellowship.}\\
University of Edinburgh\\
{\tt ilias.d@ed.ac.uk}.\\  
\and
Rocco A. Servedio\thanks{Supported by NSF grants CCF-0915929 and CCF-1115703.}\\
Columbia University\\
{\tt rocco@cs.columbia.edu}.\\
\and
Xiaorui Sun\thanks{Supported by NSF grant CCF-1149257.}\\
Columbia University \\
{\tt xiaoruisun@cs.columbia.edu}.
}
\begin{document}

\maketitle

\thispagestyle{empty}

\begin{abstract}

Let $\mathfrak{C}$ be a class of probability distributions over the discrete domain $[n] = \{1,\dots,n\}.$
We show that if $\mathfrak{C}$ satisfies a rather general condition -- essentially, that each distribution in
$\mathfrak{C}$ can be well-approximated by a variable-width
histogram with few bins -- then there is a highly efficient (both in terms of running time and sample complexity)
algorithm that can learn any mixture of $k$ unknown distributions from
$\mathfrak{C}.$

We analyze several natural types of distributions over $[n]$,
including log-concave, monotone hazard rate and unimodal distributions,
and show that they have the required structural property of being
well-approximated by a histogram with few bins.
Applying our general algorithm, we
obtain near-optimally efficient algorithms for all these mixture
learning problems as described below.  More precisely,

\begin{itemize}

\item {\bf Log-concave distributions:}  We learn any mixture of $k$
log-concave distributions over $[n]$ using $k \cdot
\tilde{O}(1/\eps^4)$ samples (independent of $n$) and running in time
$\tilde{O}(k \log(n) / \eps^4)$ bit-operations (note that reading a single
sample from $[n]$ takes $\Theta(\log n)$ bit operations).
For the special case $k=1$ we give an efficient
algorithm using $\tilde{O}(1/\eps^3)$
samples; this generalizes the main result of \cite{DDS12stoc} from the
class of Poisson Binomial distributions to the much broader class of all
log-concave distributions.  Our upper bounds are not far from
optimal since any algorithm for this learning problem requires
$\Omega(k/\eps^{5/2})$ samples.

\item {\bf Monotone hazard rate (MHR) distributions:}
We learn any mixture of $k$ MHR distributions over $[n]$ using
$O(k \log (n/\eps)/\eps^4)$ samples and running in time $\tilde{O}(k
\log^2(n) / \eps^4)$ bit-operations.  Any algorithm for this learning problem must use $\Omega(k \log(n)/\eps^3)$ samples.

\item {\bf Unimodal distributions:}
We give an algorithm that learns any mixture of $k$ unimodal distributions
over $[n]$ using $O(k \log (n)/\eps^{4})$ samples and running in time
$\tilde{O}(k \log^2(n) / \eps^{4})$ bit-operations.
Any algorithm for this problem must use $\Omega(k \log(n)/\eps^3)$ samples.

\end{itemize}

\ignore{

START OF OLD ABSTRACT:

We give time- and sample- efficient algorithms for learning mixtures of several
well-studied types of structured distributions over the discrete domain $[n]=\{1,\dots,n\}.$

We consider the following three
types of distributions:

\begin{itemize}

\item {\bf Log-concave distributions:}  We give an algorithm that learns any mixture of $k$ log-concave distributions over $[n]$ using $\tilde{O}(k/\eps^5)$ samples (independent of $n$) and running in time
    $BLAH$ bit-operations (note that reading a single sample from $[n]$ takes $\Theta(\log n)$ bit operations).
    This bound is not far from optimal since we also give an $\Omega(k/\eps^{5/2})$ sample lower bound for this
    learning problem.

\item {\bf Monotone hazard rate (MHR) distributions:}
We give an algorithm that learns any mixture of $k$ MHR distributions over $[n]$ using $\tilde{O}(k \log (n)/\eps^5)$ samples and running in time
    $BLAH$ bit-operations.
    We show that any algorithm for this learning problem must use $\Omega(k \log(n)/\eps^3)$ samples.

\item {\bf Unimodal distributions:}
We give an algorithm that learns any mixture of $k$ unimodal distributions over $[n]$ using $\tilde{O}(k \log (n)/\eps^5)$ samples and running in time
    $BLAH$ bit-operations.   We show that any algorithm for this learning problem must use $\Omega(k \log(n)/\eps^3)$
    samples.

\end{itemize}

All these results follow from a single unified general approach.

END OF OLD ABSTRACT:

}

\end{abstract}

\newpage

\section{Introduction}  \label{sec:intro}
\subsection{Background and motivation.} \label{ssec:background}
Learning an unknown probability distribution given access to
independent samples
is a classical topic with a long history in statistics and probability theory.
Theoretical computer science researchers have also been interested in
these problems at least since the 1990s \cite{KMR+:94, Dasgupta:99},
with an explicit focus on the \emph{computational efficiency}
of algorithms for learning distributions.
Many works in theoretical computer science
over the past decade have focused
on learning and testing various kinds of probability distributions
over high-dimensional spaces, see e.g. \cite{Dasgupta:99,
FreundMansour:99,DasguptaSchulman:00,AroraKannan:01,VempalaWang:02,FOS:05focs,
RubinfeldServedio:05,
BelkinSinha:10, KMV:10,MoitraValiant:10, ACS10}
and references therein.
There has also been significant recent
interest in learning and testing various types
of probability distributions over the discrete domain $[n]=\{1,\dots,n\}$,
see e.g.~\cite{BKR:04, ValiantValiant:11focs, ValiantValiant:11,DDS12soda,DDS12stoc}.

A natural type of distribution learning problem, which is the focus of
this work, is that of learning an
unknown \emph{mixture} of ``simple'' distributions.
Mixtures of distributions have received much attention
in statistics \cite{Lindsay:95,RednerWalker:84,TSM:85}
and in recent years have been intensively studied in computer
science as well (see many of the papers referenced above).
Given distributions $p_1,\dots,p_k$ and non-negative
values $\mu_1,\dots,\mu_k$ that sum to 1, we say that
$p = \sum_{i=1}^k \mu_i p_i$ is a \emph{$k$-mixture} of \emph{components}
$p_1,\dots,p_k$
with \emph{mixing weights} $\mu_1,\dots,\mu_k$. A draw from $p$ is obtained
by choosing $i \in [k]$ with probability $\mu_i$ and then making a draw
from $p_i$.

In this paper we work in essentially the classical
``density estimation'' framework \cite{Silverman:86,Scott:92,DL:01}
which is very similar to the model considered in \cite{KMR+:94} in
a theoretical computer science context.  In this framework the
learning algorithm is given access to independent samples drawn from an
unknown target distribution over $[n]$, and it must output a hypothesis
distribution $h$ over $[n]$ such that with high probability the
total variation distance $\dtv(p,h)$
between $p$ and $h$ is at most $\eps.$
Thus, for learning mixture distributions, our goal is
simply to construct a high-accuracy hypothesis distribution
which is very close to the mixture distribution that
generated the data.   In keeping with the spirit of \cite{KMR+:94}, we shall
be centrally concerned with the \emph{running time} as well as the
number of samples required by our algorithms that learn
mixtures of various types of discrete distributions over $[n].$

We focus on {\em density estimation} rather than, say,
clustering or parameter estimation, for several reasons.
First, clustering samples
according to which component in the mixture each sample came from is
often an impossible task unless
restrictive separation assumptions are made on the components; we prefer
not to make such assumptions.  Second, the classes
of distributions that we are chiefly interested in (such as log-concave, MHR
and unimodal distributions) are all non-parametric classes, so it is
unclear what ``parameter estimation'' would even mean for these
classes.  Finally, even in highly
restricted special cases, parameter estimation provably requires sample
complexity {\em exponential} in $k$, the number of components in the mixture.
Moitra and Valiant \cite{MoitraValiant:10} have
shown that parameter estimation for a mixture of $k$ Gaussians
inherently requires $\exp(\Omega(k))$ samples.  Their argument
can be translated to the discrete setting, with translated Binomial
distributions in place of Gaussians, to provide a similar lower bound
for parameter estimation of translated Binomial mixtures.  Thus, parameter
estimation even for a mixture of $k$ translated Binomial distributions over $[n]$
(a highly restricted special case of {\em all} the mixture classes we consider,
since translated Binomial distributions are log-concave, MHR and unimodal)
requires $\exp(\Omega(k))$ samples.
This rather discouraging lower bound motivates the study
of other variants of the problem of learning mixture
distributions.

\ignore{

OLD VERSION:


END OLD VERSION

}

Returning to our density estimation framework,
it is not hard to show that
from an information-theoretic perspective,
learning a mixture of distributions from a class $\mathfrak{C}$ of distributions
is never much harder than learning a single distribution
from $\mathfrak{C}$.
In Appendix~\ref{ap:generalmixtures} we
give a simple argument which establishes the following:
\begin{proposition} \label{obs:generalmixtures}{[Sample Complexity of Learning Mixtures]}
Let $\mathfrak{C}$ be a class of distributions over $[n]$.  Let $A$
be an algorithm which learns any unknown distribution $p$ in $\mathfrak{C}$
using $m(n,\eps) \cdot \log(1/\delta)$ samples, i.e.,
with probability $1 - \delta$ $A$
outputs a hypothesis distribution
$h$ such that $\dtv(p,h) \leq \eps$.
where $p \in \mathfrak{C}$ is the unknown target distribution.
Then there is an algorithm $A'$ which uses $\tilde{O}(k/\eps^3) \cdot m(n,\eps/20) \cdot \log^2(1/\delta)$ samples and
learns any unknown $k$-mixture of distributions in $\mathfrak{C}$ to variation distance $\eps$ with confidence probability $1 - \delta$.
\end{proposition}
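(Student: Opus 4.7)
The strategy is a Scheffé/Yatracos-style hypothesis selection tournament applied to a discrete candidate set of $k$-mixtures built from $A$. The two main ingredients are (i) a finite cover of $\mathfrak{C}$ extracted from the single-distribution learner, lifted to a cover of $k$-mixtures via weight discretization, and (ii) a standard tournament that selects a near-best candidate using few fresh samples from the target mixture.

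First I would construct a cover of $\mathfrak{C}$. Let $m' = m(n,\eps/20)\log(4/\delta)$. Since running $A$ on $m'$ i.i.d.\ samples from any $p\in\mathfrak{C}$ outputs a hypothesis within $\dtv$-distance $\eps/20$ with probability $\geq 1-\delta/4$, the (finite) collection $\Ccal$ of all possible outputs of $A$ over length-$m'$ inputs and its random bits is an $(\eps/20)$-cover of $\mathfrak{C}$, with $\log|\Ccal|$ polynomial in $m(n,\eps/20)$, $\log n$, and $\log(1/\delta)$. I would then form the $k$-mixture candidate set
\[
\Ccal_k \ =\ \Big\{\textstyle\sum_{i=1}^{k}\hat\mu_i\hat p_i \,:\, \hat p_i\in\Ccal,\ \hat\mu_i\in\{j\eps/(20k) : j=0,\ldots,20k/\eps\},\ \sum_i\hat\mu_i=1\Big\},
\]
which has $\log|\Ccal_k|=O(k\log|\Ccal|+k\log(k/\eps))$. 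A triangle-inequality calculation shows that every $k$-mixture $p=\sum_i\mu_ip_i$ of distributions in $\mathfrak{C}$ is within $\dtv$-distance $\eps/10$ of some $h\in\Ccal_k$: approximating each $p_i$ by its nearest element of $\Ccal$ contributes at most $\sum_i\mu_i\cdot\eps/20=\eps/20$, and rounding the mixing weights to multiples of $\eps/(20k)$ contributes at most another $\eps/20$.

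Finally I would apply a standard Scheffé-style tournament to $\Ccal_k$ using $O(\log|\Ccal_k|/\eps^2)\cdot\log(1/\delta)$ fresh samples from $p$; this returns an $h\in\Ccal_k$ with $\dtv(p,h)\leq\eps$ with probability $\geq 1-\delta/2$. Summing the samples used to build the cover and those used by the tournament yields the advertised $\tilde O(k/\eps^3)\cdot m(n,\eps/20)\cdot\log^2(1/\delta)$ bound, after absorbing the weight-discretization factors and a $\log(k/\eps)$ term into the $\tilde O$.

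\textbf{Main obstacle.} The delicate step is the cover construction itself: turning an arbitrary sample-efficient learner $A$ into a small, well-controlled cover of $\mathfrak{C}$. A naive enumeration over all length-$m'$ inputs to $A$ and all coin flips is correct but implementation-dependent and may introduce an unwanted $\log n$ factor. The cleaner route, which I expect the appendix uses, is to have $A'$ generate the cover adaptively: draw one pool of samples from $p$, split it into $\tilde O(k/\eps)$ overlapping blocks of size $m(n,\eps/20)\log(1/\delta)$, run $A$ on each block, and add every output to the candidate pool together with all discretized weight vectors. Bounding $|\Ccal|$ by the number of blocks rather than by $A$'s input alphabet removes the $\log n$ and cleanly yields the sample complexity claimed in the proposition.
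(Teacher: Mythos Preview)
Your cover-based plan in the first two paragraphs is a legitimate route, and it is \emph{not} what the paper does. Modulo the technical wrinkle you correctly flag (bounding $|\Ccal|$ needs some control on $A$'s random bits or output alphabet), enumerating all outputs of $A$ on all length-$m'$ sequences over $[n]$ gives an $(\eps/20)$-cover of $\mathfrak{C}$ with $\log|\Ccal|=O(m'\log n)$, and the Scheff\'e tournament on $\Ccal_k$ then uses $O(k m'\log n/\eps^2)\log(1/\delta)$ samples---actually one factor of $1/\eps$ better than the proposition claims, at the cost of a $\log n$.

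However, your guess in the ``Main obstacle'' paragraph about the appendix's approach is wrong, and as stated that ``cleaner route'' does not work. If you draw a pool of samples from the mixture $p=\sum_i\mu_i p_i$ and split it into $\tilde O(k/\eps)$ blocks, each block consists of i.i.d.\ draws from the \emph{mixture}, not from any single component $p_i$. Running the single-distribution learner $A$ on such a block has no guarantee whatsoever: $A$ is only promised to succeed on inputs drawn from some distribution in $\mathfrak{C}$, and a $k$-mixture of elements of $\mathfrak{C}$ need not lie in $\mathfrak{C}$. So the candidate pool you build this way need not contain anything close to any of the $p_i$'s, and your lifted family $\Ccal_k$ need not cover $p$ at all.

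What the paper actually does is brute-force credit assignment. It draws one pool $S$ of $M=\tilde O(k/\eps)\cdot m(n,\eps/20)\cdot\log(k/\delta)$ samples and enumerates \emph{all ordered partitions} of $S$ into $k$ parts of size at least $m(n,\eps/20)\log(k/\delta)$ each. For the ``true'' partition---the one that groups samples according to which component generated them---each part corresponding to a component with mixing weight $\geq\eps/(20k)$ is a genuine i.i.d.\ sample from that $p_i$, so $A$ succeeds on it. Combining the resulting $h_i$'s with all discretized mixing-weight vectors produces a candidate set of size $N\leq k^M\cdot(20k/\eps)^k$, one of which is $\eps/6$-close to $p$; the tournament then uses $O((\log N)/\eps^2)\log(1/\delta)=O((M\log k+k\log(k/\eps))/\eps^2)\log(1/\delta)$ fresh samples, and the $M/\eps^2$ term is exactly where the $1/\eps^3$ in the stated bound comes from. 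The idea you were missing is this enumeration over partitions, which is the paper's device for turning mixture samples into per-component samples without knowing the assignment.
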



While the generic algorithm $A'$ uses relatively few samples,
it is computationally
highly inefficient, with running time exponentially higher than the
runtime of algorithm $A$
(since $A'$ tries all possible partitions of its input sample
into $k$ separate subsamples).
Indeed, naive approaches to learning
mixture distributions run into a ``credit assignment'' problem
of determining which component distribution each sample point belongs to.
\ignore{
Indeed, from a computational perspective it seems quite challenging to
learn mixture distributions because of the ``credit assignment'' problem
of determining which component distribution each sample point belongs to.
}


As the main contributions of this paper,
we (i) give a general algorithm which \emph{efficiently}
learns mixture distributions over $[n]$ provided that the component
distributions satisfy a mild condition; and (ii)
show that this algorithm can be used to obtain highly efficient
algorithms for natural mixture learning problems.

\subsection{A general algorithm.}
The mild condition which we require of the component distributions in our
mixtures is essentially that each component distribution must be close to a
(variable-width) histogram with few bins.
More precisely, let us say that a distribution $q$ over $[n]$ is
\emph{$(\eps,t)$-flat} (see Section~\ref{sec:prelims})
if there is a partition of $[n]$ into $t$ disjoint
intervals $I_1,\dots,I_t$
such that $p$ is $\eps$-close (in total variation distance)
to the distribution obtained by ``flattening'' $p$ within each interval
$I_j$ (i.e., by replacing $p(k)$, for $k \in I_j$, with $\sum_{i \in I_j}p(i)/|I_j|$).
Our general result for learning mixture distributions is a highly efficient
algorithm that learns any $k$-mixture of $(\eps,t)$-flat distributions:
\begin{theorem}
[informal statement]
\label{thm:general}
There is an algorithm that learns any $k$-mixture
of $(\eps,t)$-flat distributions over $[n]$ to accuracy $O(\eps)$, using
$O(kt/\eps^3)$ samples and running in $\tilde{O}(k t \log(n)/\eps^3)$
bit-operations.
\end{theorem}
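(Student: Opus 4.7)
My plan has two parts: a reduction from the mixture problem to learning a single flat distribution, followed by a direct histogram-style learner for that single distribution.

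\textbf{Reduction.} The first observation is that a $k$-mixture of $(\eps,t)$-flat distributions is itself $(\eps,kt)$-flat. If $p=\sum_i \mu_i p_i$ and each $p_i$ is $\eps$-close to its flattening on a partition $\mathcal{P}_i$ of size at most $t$, then the common refinement $\mathcal{P}^\star := \bigwedge_i \mathcal{P}_i$ has at most $kt$ intervals, each $p_i$ is $\eps$-close to its flattening on $\mathcal{P}^\star$ (flattening on a refinement of $\mathcal{P}_i$ preserves the $\eps$-closeness within each piece of $\mathcal{P}_i$), and by the triangle inequality so is $p$. Hence it suffices to design an algorithm that, from $m=O(T/\eps^3)$ samples drawn from any $(\eps,T)$-flat distribution $p$ on $[n]$ with $T:=kt$, outputs a hypothesis $\tilde p$ with $\|\tilde p - p\|_1 = O(\eps)$.

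\textbf{Algorithm and analysis for a single $(\eps,T)$-flat distribution.} Draw $m=\Theta(T/\eps^3)$ samples, form the empirical distribution $\hat p$, and use the sorted sample to cut $[n]$ into $T' = \Theta(T/\eps)$ contiguous intervals, each of empirical mass $\Theta(\eps/T)$ (handling any individual heavy element as its own singleton interval). Call this partition $\mathcal{P}$ and output the empirical flattening $\tilde p(i):=\hat p(I)/|I|$ for $i\in I\in\mathcal{P}$. To analyze, decompose $\|\tilde p - p\|_1 \leq \|\tilde p - p_{\mathcal{P}}\|_1 + \|p_{\mathcal{P}} - p\|_1$, where $p_{\mathcal{P}}$ is the true flattening of $p$ on $\mathcal{P}$. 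The ``estimation'' term equals $\sum_{I\in\mathcal{P}}|\hat p(I)-p(I)|$ and is $O(\sqrt{T'/m}) = O(\eps)$ for our $m$, by the standard empirical-process bound that, uniformly over partitions of $[n]$ into at most $T'$ intervals, this $\ell_1$-deviation is $O(\sqrt{T'/m})$. For the ``flattening'' term, let $\mathcal{P}^\star$ witness the $(\eps,T)$-flatness of $p$ and let $\mathcal{P}'$ be the common refinement of $\mathcal{P}$ and $\mathcal{P}^\star$; then $\|p-p_{\mathcal{P}}\|_1 \leq \|p-p_{\mathcal{P}'}\|_1 + \|p_{\mathcal{P}'}-p_{\mathcal{P}}\|_1$. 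The first summand is $\leq 2\|p-p_{\mathcal{P}^\star}\|_1 \leq 2\eps$ because $p_{\mathcal{P}^\star}$ is itself a piecewise-constant approximant on $\mathcal{P}'$ and the mean-flattening is within a factor $2$ of the best $L_1$-approximant. The second summand is $\leq 2\sum_{I\in S}p(I)$, where $S\subseteq\mathcal{P}$ is the set of at most $T-1$ intervals containing an internal boundary of $\mathcal{P}^\star$; since each $I\in\mathcal{P}$ has $\hat p(I)=O(\eps/T)$, invoking the same uniform-convergence bound once more gives $\sum_{I\in S}p(I) \leq \sum_{I\in S}\hat p(I) + \sum_{I\in\mathcal{P}}|\hat p(I)-p(I)| = O(\eps)$.

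\textbf{Running time and the main obstacle.} Sorting the $m$ samples and locating the $T'$ breakpoints takes $\tilde O(m\log n)$ bit operations (each comparison costs $O(\log n)$ bits); all subsequent processing is $O(T')$ arithmetic, giving the claimed $\tilde O(kt\log(n)/\eps^3)$ runtime. The crux of the argument — and the source of the $1/\eps^3$ (rather than $1/\eps^2$) rate — is that $\mathcal{P}$ must be committed to before one can see the unknown $\mathcal{P}^\star$: to control the $p$-mass of the ``split'' intervals without ever identifying them, every empirical interval is forced to be small ($\hat p(I) = O(\eps/T)$), inflating $|\mathcal{P}|$ to $\Theta(T/\eps)$ and pushing the uniform-convergence threshold to $m=\Omega(T/\eps^3)$. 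Any attempt to replace the sum-based deviation bound by pointwise Chernoff/VC bounds for each interval $I$ would cost an extra factor of $T$ in samples, so the essential technical point is to argue everything using only sum-based uniform-convergence bounds over partitions of size $T'$.
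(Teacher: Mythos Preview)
Your proposal is correct and follows essentially the same route as the paper: the reduction via common refinement is Lemma~\ref{lem:mixture}, the greedy equal-empirical-mass partition into $\Theta(kt/\eps)$ intervals is exactly \textsc{Construct-Decomposition} with $\tau=\Theta(\eps/(kt))$, and your two-term decomposition (estimation via the VC bound over unions of $T'$ intervals, flattening via the common refinement with $\mathcal{P}^\star$ and the ``split-interval'' mass bound) mirrors Proposition~\ref{prop:flattened-distance} and Theorem~\ref{thm:construct-decomposition}. The only cosmetic differences are that you reuse one batch of samples for both partitioning and estimation (the paper draws two fresh batches), and in your reduction step the refinement preserves $\eps$-closeness only up to a factor~$2$ (as you correctly invoke later), but neither affects the $O(\eps)$ conclusion.
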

As we show in Section~\ref{sec:introapplic} below, Theorem~\ref{thm:general}
yields near-optimal sample complexity for a range of interesting
mixture learning problems, with a running time that is nearly linear in the sample size.
Another attractive feature of Theorem~\ref{thm:general} is that it always
outputs hypothesis distributions with a very simple structure (enabling
a succinct representation), namely histograms with at most $kt/\eps$ bins.
\subsection{Applications of the general approach.} \label{sec:introapplic}
We apply our general approach to obtain a wide range of learning results
for mixtures of various natural and well-studied types of discrete
distributions.  These include mixtures of \emph{log-concave} distributions,
mixtures of \emph{monotone hazard rate (MHR)} distributions, and mixtures of
\emph{unimodal} distributions.  To do this, in each case we need a structural result stating
that any distribution of the relevant type can be well-approximated by
a histogram with few bins.  In some cases (unimodal distributions) the necessary structural results were previously known, but in others
(log-concave and  MHR distributions) we establish novel structural results that, combined with our general approach,
yield nearly optimal algorithms.

\smallskip

\noindent {\bf Log-concave distributions.}  Discrete log-concave distributions
are essentially those distributions $p$ that satisfy
$p(k)^2 \geq p(k+1)p(k-1)$ (see Section~\ref{sec:logconcave} for
a precise definition).  They are closely analogous to log-concave distributions
over continuous domains, and encompass a range of interesting and well-studied
types of discrete distributions, including binomial, negative binomial,
geometric, hypergeometric, Poisson, Poisson Binomial, hyper-Poisson,
P\'{o}lya-Eggenberger, and Skellam distributions (see Section~1 of
\cite{BJR11}).  In the continuous setting, log-concave distributions 
include uniform, normal, exponential, logistic, extreme value,
Laplace, Weibull, Gamma, Chi and Chi-Squared and Beta distributions,
see \cite{BagnoliBergstrom05}.  Log-concave distributions over $[n]$ have
been studied in a range of different contexts including
economics, statistics and probability theory,
and algebra, combinatorics and geometry, see
\cite{An:95,BJR11,Stanley:89} and references therein.

Our main learning result for mixtures of
discrete log-concave distributions is:
\begin{theorem} \label{thm:logconcave-informal}
There is an algorithm that learns any $k$-mixture of log-concave distributions
over $[n]$ to variation distance $\eps$ using $k\cdot \tilde{O}(1/\eps^{4})$
samples and running in $\tilde{O}(k \log(n)/\eps^{4})$ bit-operations.
\end{theorem}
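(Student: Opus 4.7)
The plan is to derive Theorem~\ref{thm:logconcave-informal} from the general mixture algorithm of Theorem~\ref{thm:general}, via a structural result asserting that every discrete log-concave distribution over $[n]$ is $(\eps,\tilde{O}(1/\eps))$-flat, where the $\tilde{O}$ hides only factors polylogarithmic in $1/\eps$ and is \emph{independent} of $n$. Once this is in hand, each of the $k$ components of the target mixture is $(\eps,t)$-flat with $t = \tilde{O}(1/\eps)$, and plugging into Theorem~\ref{thm:general} gives the claimed $O(kt/\eps^3) = k\cdot\tilde{O}(1/\eps^4)$ sample complexity and $\tilde{O}(kt\log(n)/\eps^3) = \tilde{O}(k\log(n)/\eps^4)$ bit-operation complexity.

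The real work is therefore the structural lemma. Let $p$ be log-concave with mode $M$ and standard deviation $\sigma$. The first step is to invoke quantitative properties of discrete log-concave distributions (which I expect Section~\ref{sec:logconcave} to establish): namely $p(M) = \Theta(1/\sigma)$ together with exponential tail decay $\Pr[|X-M| \geq t\sigma] \leq e^{-\Omega(t)}$ for all $t \geq 1$. Combining these, the effective support $I^\star = [M - O(\sigma\log(1/\eps)),\, M + O(\sigma\log(1/\eps))] \cap [n]$ captures all but $O(\eps)$ of the mass of $p$, and every $p(k)$ for $k \in I^\star$ satisfies $p(k) \in [\Omega(\poly(\eps)/\sigma),\, O(1/\sigma)]$. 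The key consequence is that the largest and smallest values of $p$ on $I^\star$ differ by at most a $\poly(1/\eps)$ factor, with no dependence on $n$ or $\sigma$.

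Next, partition $I^\star$ into multiplicative level sets: for $\ell = 0,1,2,\ldots$, set $B_\ell = \{k \in I^\star : p(k) \in [(1+\eps)^{-\ell-1}p(M),\,(1+\eps)^{-\ell}p(M)]\}$. By unimodality (a direct consequence of log-concavity), each $B_\ell$ is a union of at most two contiguous intervals. The range bound from the previous step implies that only $O(\log(\poly(1/\eps))/\log(1+\eps)) = O(\log(1/\eps)/\eps) = \tilde{O}(1/\eps)$ level sets can be non-empty, and together with at most two extra intervals absorbing the tails outside $I^\star$ this produces a partition of $[n]$ into $\tilde{O}(1/\eps)$ intervals. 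The flattening error on each level-set interval is at most $\eps$ times its own mass (since $p$ varies by at most a $(1+\eps)$ factor there), while the error on each tail interval is at most the $O(\eps)$ mass of the tail. Summing gives total variation distance $O(\eps)$, establishing $(\eps,\tilde{O}(1/\eps))$-flatness.

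The main obstacle, and the step where most of the novelty sits, is securing the two quantitative inputs about discrete log-concave distributions: the mode-height lower bound $p(M) = \Omega(1/\sigma)$ and the sharp exponential tail bound. These are classical in the continuous case but the discrete versions require care, especially when $\sigma$ is close to $1$ (so $p$ is essentially supported on a bounded-size window) and when the mode is near the boundary of $[n]$ (truncating the natural extension of $p$ to $\Z$). I would handle these via a direct analysis of the ratios $r_k = p(k+1)/p(k)$, exploiting that log-concavity makes $\{r_k\}$ non-increasing and hence reducing the tail bounds to geometric-series estimates controlled by $r_M$ and $r_{M-1}$.
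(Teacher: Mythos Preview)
Your high-level plan---reduce to Theorem~\ref{thm:general} by showing every log-concave distribution is $(\eps,\tilde O(1/\eps))$-flat---is exactly what the paper does, and the target $t = O(\log(1/\eps)/\eps)$ matches the paper's bound. But the structural lemma is proved by a different route, and your specific argument has a gap.

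The gap is the claim that every $k\in I^\star = [M - O(\sigma\log(1/\eps)),\, M + O(\sigma\log(1/\eps))]$ satisfies $p(k) \ge \Omega(\poly(\eps)/\sigma)$. This fails for log-concave distributions with faster-than-exponential decay. For a discrete Gaussian $p(k)\propto e^{-(k-M)^2/(2s^2)}$ (log-concave with $\sigma\approx s$), at $k = M + Cs\log(1/\eps)$ one gets $p(k)/p(M) = \exp(-\tfrac{C^2}{2}\log^2(1/\eps)) = \eps^{\Theta(\log(1/\eps))}$, which is not $\poly(\eps)$. The exponential tail bound you invoke only controls the \emph{mass} beyond $C\sigma\log(1/\eps)$; it says nothing about pointwise values inside that window, so the level-set count blows up. You can repair this by defining $I^\star$ via a probability threshold, $I^\star = \{k : p(k)\ge \eps\, p(M)\}$, but then you must separately prove that the mass of $\{k : p(k) < \eps\, p(M)\}$ is $O(\eps)$---true for log-concave sequences, but a nontrivial lemma your sketch does not supply.

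The paper sidesteps all of this with a purely mass-based local argument. Its key lemma (Lemma~\ref{lemma:lcc-uniform}) says: if $p$ is non-decreasing and log-concave on $[1,b]$, $I=[a,b]$ has mass $\tau$, and $[1,a-1]$ has mass $\sigma>0$, then $p(b)/p(a)\le 1+\tau/\sigma$. The proof is a short geometric-series argument using only monotonicity of the ratios $p(i)/p(i-1)$. One then partitions greedily into intervals of (empirical) mass $\approx\tau$; the $j$-th interval from the edge has accumulated mass $\sigma\approx j\tau$ behind it, hence multiplicative variation $1+O(1/j)$ and flattening error $O(\tau/j)$ by Fact~\ref{obs:multi-uniform}. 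Summing over $j\le O(1/\tau)$ gives total error $O(\tau\log(1/\tau))$, and choosing $\tau=\Theta(\eps/\log(1/\eps))$ yields the $(\eps,O(\log(1/\eps)/\eps))$-flat decomposition. This route never mentions $\sigma$, $p(M)=\Theta(1/\sigma)$, or tail bounds, and it handles boundary and small-variance cases automatically since everything is phrased in terms of masses.
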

We stress that the sample complexity above is completely independent
of the domain size $n.$
In the special case of learning a single discrete log-concave distribution
we achieve an improved sample complexity of $\tilde{O}(1/\eps^3)$
samples, with running time $\tilde{O}(\log(n)/\eps^3)$.
This matches the sample complexity and running time of the
main result of \cite{DDS12stoc}, which was a specialized algorithm
for learning Poisson Binomial distributions over $[n]$.  Our new algorithm
is simpler, applies to the broader class of all log-concave
distributions, has a much simpler and more self-contained analysis,
and generalizes to mixtures of $k$ distributions (at the cost of
an additional $1/\eps$ factor in runtime and sample complexity).
We note that these algorithmic results are not far from the best possible for
mixtures of log-concave distributions.  We show in Section~\ref{sec:logconcave} that for $k
\leq n^{1 - \Omega(1)}$ and $\eps \geq 1/n^{\Omega(1)}$, any
algorithm for learning a mixture of $k$ log-concave distributions
to accuracy $\eps$ must use $\Omega(k/\eps^{2.5})$ samples.

\ignore{


}

\smallskip

\noindent {\bf Monotone Hazard Rate (MHR) distributions.}
A discrete distribution $p$ over $[n]$ is said to have a \emph{monotone (increasing)
hazard rate} if the \emph{hazard rate} $H(i) \eqdef {\frac {p(i)}{\littlesum_{j \geq i}
p(j)}}$ is a non-decreasing function of $i.$  It is well known
that every discrete log-concave distribution is MHR (see e.g.
part (ii) of Proposition~10 of \cite{An:95}), but
MHR is a more general condition than log-concavity (for example,
it is easy to check that every non-decreasing distribution over
$[n]$ is MHR, but such distributions need not be log-concave).  The MHR
property is a standard assumption in economics, in particular auction theory and mechanism design~\cite{Myerson:81, FT-book:91, mas1995microeconomic}. 
Such distributions also arise frequently in reliability theory;
\cite{BMP63} is a good reference for basic properties of these distributions.

Our main learning result for mixtures of MHR distributions is:
\begin{theorem} \label{thm:mhr-informal}
There is an algorithm that learns any $k$-mixture of MHR distributions
over $[n]$ to variation distance $\eps$ using $O(k \log(n/\eps) /\eps^4)$ samples
and running in $\tilde{O}(k \log^2(n)/\eps^4)$ bit-operations.
\end{theorem}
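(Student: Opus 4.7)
The plan is to reduce Theorem~\ref{thm:mhr-informal} to the general learner of Theorem~\ref{thm:general} by establishing the following structural claim: every MHR distribution $p$ over $[n]$ is $(\eps,t)$-flat with $t = O(\log(n/\eps)/\eps)$. Plugging this $t$ into Theorem~\ref{thm:general} yields sample complexity $O(kt/\eps^3) = O(k\log(n/\eps)/\eps^4)$ and running time $\tilde{O}(kt\log(n)/\eps^3) = \tilde{O}(k\log^2(n)/\eps^4)$, exactly as claimed.

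To prove the structural claim, let $p$ be MHR on $[n]$, write $T(i) = \sum_{j \geq i} p(j)$ for the tail, and $H(i) = p(i)/T(i)$ for the hazard rate, which is non-decreasing by assumption. I would first peel off two negligible regions, each contributing at most $\eps$ to total variation distance regardless of how they are flattened. The head $[1,a-1]$ consists of all indices $i$ with $H(i) < \eps/n$; since each such $p(i) < \eps/n$, the total head mass is at most $\eps$. The tail $[M+1,n]$ is the set of indices beyond the first $M$ with $T(M+1) \leq \eps$. After placing each of these regions into a single bin, it suffices to partition the middle region $[a,M]$, on which $H(i) \in [\eps/n,\,1]$ and $T(i) \in [\eps,\,1]$.

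The main step is a greedy partition of $[a,M]$: starting from the current left endpoint $\ell$, I would extend the interval $I$ as long as both $H(i)/H(\ell) \leq 1+\eps$ and $T(\ell)/T(i) \leq 1+\eps$. The identity $p(i)/p(\ell) = (H(i)/H(\ell))\cdot(T(i)/T(\ell))$ then forces $p(i) \in [(1-O(\eps))p(\ell),\,(1+O(\eps))p(\ell)]$ throughout $I$, which implies the flattening cost of $I$ is $O(\eps)\cdot p(I)$; summed over intervals this gives $O(\eps)$ total. To count intervals, observe that $H$ is monotone with range in $[\eps/n,1]$, so its $(1+\eps)$-level crossings number at most $O(\log(n/\eps)/\eps)$; similarly $T$ is monotone with range in $[\eps,1]$, contributing $O(\log(1/\eps)/\eps)$ more. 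Every boundary of the greedy partition corresponds to a level crossing in $H$ or in $T$, giving $t = O(\log(n/\eps)/\eps)$ as needed.

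The main obstacle is the non-monotonicity of $p$ itself: an MHR density may first rise and then fall, so partitioning directly by dyadic level sets of $p$ does not give a sharp bound. The key conceptual point is the factorization $p = H \cdot T$ as a product of one non-decreasing and one non-increasing function, which reduces the counting of $(1+\eps)$-crossings for $p$ to two separate counts on monotone factors. Once the structural lemma is in place, Theorem~\ref{thm:general} delivers Theorem~\ref{thm:mhr-informal} immediately, and the hypothesis produced is a histogram with at most $kt/\eps = O(k\log(n/\eps)/\eps^2)$ bins.
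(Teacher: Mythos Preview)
Your proposal is correct and reaches the same structural bound $t = O(\log(n/\eps)/\eps)$ as the paper, after which both arguments finish identically by invoking the general learner (Theorem~\ref{thm:general}, i.e., Corollary~\ref{cor:main}).

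The route to the structural lemma is genuinely different, though. The paper's \textsc{Decompose-MHR} partitions the middle region according to multiplicative jumps of $p$ itself, and then separately bounds the number of ``increase'' intervals (using that $p$ ranges in $[\eps/(4n),1]$) and the number of ``decrease'' intervals (via the MHR-specific Lemma~\ref{lemma:mhr-monotone}, which converts control of the tail mass into a lower bound on $p(b+1)/p(a)$, leading to Lemma~\ref{lem:prod}). Your argument instead uses the factorization $p = H\cdot T$ as a product of a non-decreasing function $H$ with range in $[\eps/n,1]$ and a non-increasing function $T$ with range in $[\eps,1]$, and counts $(1+\eps)$-multiplicative jumps of each factor by a telescoping product. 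This is cleaner: it sidesteps the need for Lemma~\ref{lemma:mhr-monotone} entirely and makes the source of the $\log(n/\eps)$ factor transparent (it is exactly the dynamic range of $H$). The paper's approach, on the other hand, is closer in spirit to the log-concave analysis in Section~\ref{sec:logconcave} and yields the decomposition directly in terms of $p$ without reference to the hazard rate. Both give the same $t$, and both feed into Corollary~\ref{cor:main} in the same way.
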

This theorem is also nearly optimal.  We show
that for $k \leq n^{1 - \Omega(1)}$ and $\eps \geq 1/n^{\Omega(1)}$, any
algorithm for learning a mixture of $k$ MHR distributions
to accuracy $\eps$ must use $\Omega(k\log(n)/\eps^{3})$ samples.

\ignore{


}

\smallskip

\noindent {\bf Unimodal distributions.}  A distribution over $[n]$
is said to be \emph{unimodal} if its probability mass function is monotone non-decreasing over
$[1,t]$ for some $t \leq n$ and then monotone non-increasing on
$[t,n]$.  Every log-concave distribution is unimodal, but the MHR
and unimodal conditions are easily seen to
be incomparable.  Many natural types of distributions are unimodal
and there has been extensive work on density estimation for unimodal distributions and related questions \cite{PrakasaRao:69,Wegman:70,BKR:04,Birge:97,
Fougeres:97}.

Our main learning result for mixtures of unimodal distributions is:
\begin{theorem} \label{thm:unimodal-informal}
There is an algorithm that learns any $k$-mixture of unimodal distributions
over $[n]$ to variation distance $\eps$ using $O(k \log(n)/\eps^{4})$ samples
and running in $\tilde{O}(k \log^2(n)/\eps^{4})$ bit-operations.
\end{theorem}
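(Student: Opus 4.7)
The plan is to obtain Theorem~\ref{thm:unimodal-informal} as a direct corollary of Theorem~\ref{thm:general} via the following structural claim: every unimodal distribution over $[n]$ is $(\eps,t)$-flat with $t = O(\log(n)/\eps)$. Given such a claim, Theorem~\ref{thm:general} immediately produces an algorithm for $k$-mixtures of unimodal distributions using $O(kt/\eps^3) = O(k\log(n)/\eps^4)$ samples and running in $\tilde{O}(kt\log(n)/\eps^3) = \tilde{O}(k\log^2(n)/\eps^4)$ bit-operations, which matches the target bounds. So the theorem reduces entirely to the structural claim, with no additional algorithmic work needed.

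The main task is therefore to prove the structural claim. The strategy, going back to Birg\'e \cite{Birge:97}, is to first handle the monotone case: for a non-increasing distribution $q$ over $[n]$, one partitions $[n]$ into consecutive intervals $I_1, I_2, \ldots$ of geometrically growing lengths $\lceil (1+\eps)^{j-1} \rceil$. I would then show that the flattening of $q$ on each $I_j$ deviates from $q|_{I_j}$ by an $O(\eps)$ fraction of $q(I_j)$, so that summing over the $O(\log_{1+\eps} n) = O(\log(n)/\eps)$ intervals yields total variation distance $O(\eps)$. A general unimodal $p$ with mode $m$ decomposes into a non-decreasing piece on $[1,m]$ and a non-increasing piece on $[m+1,n]$; applying the monotone construction to each half and concatenating the partitions produces a partition into $O(\log(n)/\eps)$ intervals on which the flattening of $p$ is $O(\eps)$-close to $p$, as required.

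The one subtlety worth checking carefully — and where the main bookkeeping effort lies — is that the mode $m$ need not be known to the algorithm, nor must the partitions of different components agree: Theorem~\ref{thm:general} requires only the \emph{existence}, for each of the $k$ unimodal components, of some partition witnessing the $(\eps,t)$-flatness property. Since each component individually satisfies $t = O(\log(n)/\eps)$ (with its own mode-dependent partition), Theorem~\ref{thm:general} applies directly with this value of $t$, and the stated sample and runtime bounds follow by substitution. The piecewise-constant hypothesis output by the algorithm then approximates the unknown mixture to total variation distance $O(\eps)$, which after adjusting constants yields the theorem.
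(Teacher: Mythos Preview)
Your proposal is correct and matches the paper's approach exactly: the paper derives Theorem~\ref{thm:unimodal-informal} by invoking Birg\'e's structural result (quoted as Theorem~\ref{thm:birge}, via \cite{DDSVV13:soda}) that every monotone distribution is $(\eps, O(\log(n)/\eps))$-flat, extending this to unimodal distributions by splitting at the mode, and then applying Corollary~\ref{cor:main} (the precise form of Theorem~\ref{thm:general}). One small caution on your sketch of the Birg\'e argument: the per-interval claim that the flattening error on $I_j$ is an $O(\eps)$ fraction of $q(I_j)$ is not how the bound is actually obtained---the correct argument is a telescoping summation across intervals using $\ell_j \leq (1+\eps)\ell_{j-1}$ and $q(a_j)\leq q(b_{j-1})$, rather than an interval-by-interval multiplicative bound---but this is a detail of a known result that the paper simply cites, and it does not affect the soundness of your overall plan.
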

Our approach in fact extends to learning a $k$-mixture of $t$-modal
distributions (see Section~\ref{sec:unimodal}).
The same lower bound argument that we use for mixtures of MHR
distributions also gives us that for $k \leq n^{1 - \Omega(1)}$ and $\eps \geq 1/n^{\Omega(1)}$, any
algorithm for learning a mixture of $k$ unimodal distributions
to accuracy $\eps$ must use $\Omega(k\log(n)/\eps^{3})$ samples.

\ignore{



}

\subsection{Related work.}
\noindent {\bf Log-concave distributions:}
Maximum likelihood estimators for
both continuous~\cite{DumbgenRufibach:09,Walther09} and discrete~\cite{BJR11} log-concave distributions
have been recently studied by various authors.
For special cases of log-concave densities over $\R$ (that satisfy various restrictions on the shape of the pdf)
upper bounds on the minimax risk of estimators are known,
see e.g. Exercise 15.21 of \cite{DL:01}.
(We remark that these results do not imply the $k=1$ case of our log-concave mixture learning result.)
Perhaps the most relevant prior work
is the recent algorithm of \cite{DDS12stoc} which gives
a $\tilde{O}(1/\eps^3)$-sample, $\tilde{O}(\log(n)/\eps^3)$-time
algorithm for learning any Poisson Binomial Distribution over
$[n]$.  (As noted above, we match the performance of the \cite{DDS12stoc}
algorithm for the broader class of all log-concave distributions,
as the $k=1$ case of our log-concave mixture learning result.)

Achlioptas and McSherry \cite{AchlioptasMcSherry:05} and Kannan
et al. \cite{KSV08} gave algorithms for clustering points drawn
from a mixture of $k$ high-dimensional log-concave distributions,
under various separation assumptions on the distance between
the means of the components. 
We are not aware of prior work on density estimation of mixtures of arbitrary log-concave distributions
in either the continuous or the discrete setting.

\noindent
{\bf MHR distributions:}
As noted above, MHR distributions appear frequently and play an important role in reliability
theory and in economics (to the extent that the MHR condition is considered a standard assumption in these settings).
Surprisingly, the problem of learning an unknown MHR distribution or mixture of such distributions
has not been explicitly considered in the statistics literature. We note that several
authors have considered the problem of estimating the hazard rate of an MHR distribution in different contexts,
see e.g.~\cite{Wang-mhr, HuangWellner:93,GroeneboomJongbloed:11,Banerjee:08}.

\noindent
{\bf Unimodal distributions:}
The problem of learning a single unimodal distribution is well-understood:
Birg\'{e} \cite{Birge:97} gave an efficient algorithm for learning
continuous unimodal distributions (whose density is absolutely bounded); his algorithm, when
translated to the discrete domain $[n]$, requires $O(\log(n)/\eps^3)$ samples.
This sample size is also known to be optimal (up to constant factors)\cite{Birge:87}.
In recent work, Daskalakis et al. \cite{DDS12soda} gave an efficient
algorithm to learn $t$-modal distributions over $[n]$.
We remark that their result does not imply ours, as
even a mixture of two unimodal distributions over $[n]$
may have $\Omega(n)$ modes. We are not aware of prior work on
efficiently learning mixtures of unimodal distributions.

\smallskip

\noindent {\bf Paper Structure.} Following some preliminaries in Section~\ref{sec:prelims}, Section~\ref{sec:main}
presents our general framework for learning mixtures. Sections~\ref{sec:logconcave},~\ref{sec:mhr} and~\ref{sec:unimodal}
analyze the cases of log-concave, MHR and unimodal mixtures respectively. 

\section{Preliminaries and notation}
\label{sec:prelims}
We write $[n]$ to denote the discrete domain $\{1,\dots,n\}$ and $[i,j]$ to denote the set
$\{i,\dots,j\}$ for $i \leq j.$     For $v=(v(1),\dots,v(n)) \in \R^n$ we write
$\|v\|_1 = \sum_{i=1}^n |v(i)|$ to denote its $L_1$-norm.


For $p$ a probability distribution over $[n]$
we write $p(i)$ to denote the probability of element $i \in [n]$ under $p$, so $p(i) \geq 0$ for all $i \in [n]$
and $\sum_{i=1}^n p(i)=1.$
For $S \subseteq [n]$ we write $p(S)$ to denote $\sum_{i \in S} p(i)$.
We write $p^S$ to denote the \emph{sub-distribution} over $S$
induced by $p$, i.e., $p^{S}(i) = p(i)$ if $i \in S$ and $p^S(i)=0$
otherwise.

A distribution $p$ over $[n]$ is non-increasing (resp. non-decreasing)
if $p(i+1) \leq p(i)$ (resp. $p(i+1) \geq p(i)$), for all $i \in [n-1]$;
$p$ is \emph{monotone} if it is either non-increasing or non-decreasing.

Let $p, q$ be distributions over $[n]$.
The {\em total variation distance} between $p$ and $q$ is
$\dtv (p,q) \eqdef \max_{S \subseteq [n]} \left| p(S) - q(S) \right|  = (1/2)\cdot \|p-q\|_1.$
The {\em Kolmogorov distance} between $p$ and $q$ is defined as
$\dk(p,q) \eqdef \max_{j \in [n]} \left| \littlesum_{i=1}^j p(i) - \littlesum_{i=1}^j q(i) \right|.$ Note that $\dk(p,q) \le \dtv (p,q).$

Finally, the following notation and terminology will be useful:
given $m$ independent samples $s_1,\dots,s_m$, drawn from distribution
$p:[n] \to [0,1],$
the {\em empirical distribution} $\wh{p}_m : [n] \to [0,1]$ is defined as
follows: for all $i \in [n]$, $\wh{p}_m(i) = \left| \{j \in [m] \mid s_j=i\} \right| / m$.

\noindent {\bf Partitions, flat decompositions and refinements.}
\ignore{
We use calligraphic uppercase letters ${\cal P}, {\cal Q}, {\cal I}, {\cal J},$
etc. to denote partitions of the domain $[n]$ into disjoint intervals.
}
Given a partition ${\cal I} = \{I_1,\dots,I_t\}$ of $[n]$ into $t$ disjoint
intervals and a distribution $p$ over $[n]$,
we write $p^{\flattened(\cal I)}$ to denote
the {\em flattened distribution}.  This is
the distribution over $[n]$ defined as follows:  for $j \in [t]$
and $i \in I_j$, $p^{\flattened({\cal I})}(i) = p(I_j) / |I_j|$.
That is, $p^{\flattened({\cal I})}$ is obtained from $p$
by averaging the weight that $p$ assigns to each interval in ${\cal I}$ over the entire interval.
\begin{definition}[Flat decomposition]
  Let $p$ be a distribution over $[n]$ and $\partit P$ be a partition of $[n]$
  into $t$ disjoint intervals.
  We say that $\partit P$ is a {\em $(p,\eps,t)$-flat decomposition of $[n]$} if
  $\dtv (p ,  p^{\flattened({\cal P})}) \leq \eps$. If there exists a $(p,\eps,t)$-flat decomposition of $[n]$ then we say
that $p$ is \emph{$(\eps,t)$-flat}.
\end{definition}


Let ${\cal I} = \{I_1,\dots,I_s\}$ be a partition of $[n]$ into $s$
disjoint intervals, and ${\cal J} = \{J_1,\dots,J_t\}$ be a partition
of $[n]$ into $t$ disjoint intervals.  We say that ${\cal J}$ is a
\emph{refinement} of ${\cal I}$ if each interval in ${\cal I}$ is a union
of intervals in ${\cal J}$, i.e., for every $a \in [s]$
there is a subset $S_a \subseteq [t]$ such that $I_a = \cup_{b \in S_a} J_b$.


For ${\cal I}= \{I_i\}_{i=1}^r$ and ${\cal I}'=\{I'_i\}_{i=1}^s$ two
partitions of $[n]$ into $r$ and $s$ intervals respectively, we say that
the \emph{common refinement} of ${\cal I}$ and ${\cal I}'$ is the
partition ${\cal J}$ of $[n]$ into intervals obtained from ${\cal I}$
and ${\cal I}'$ in the obvious way, by taking all possible nonempty intervals
of the form $I_i \cap I'_j.$  It is clear that ${\cal J}$ is both a refinement
of ${\cal I}$ and of ${\cal I}'$ and that ${\cal J}$ contains at most
$r+s$ intervals.

\subsection{Basic Tools.} \label{ssec:tools}
We recall some basic tools from probability.

\smallskip

\noindent {\bf The VC inequality.}
Given a family of subsets $\mathcal A$ over $[n]$, define $\norm p_{\mathcal A}
= \sup_{A\in \mathcal A} |p(A)|$.
The \emph{VC--dimension} of $\mathcal A$ is the maximum size of a subset
$X\subseteq [n]$ that is shattered by $\mathcal A$ (a set $X$ is shattered by
$\mathcal A$ if for every $Y \subseteq X$
some $A\in\mathcal A$ satisfies
$A\cap X = Y$).

\begin{theorem}[VC inequality, {\cite[p.31]{DL:01}}]
\label{thm:vc-inequality}
Let $\widehat{p}_m$ be an empirical distribution of $m$ samples from $p$.
Let $\mathcal A$ be a family of subsets of VC--dimension $d$.
Then
$$ \E \left [ \norm{p - \widehat{p}_m}_{\mathcal A} \right] \leq O(\sqrt{d/m}) .$$
\end{theorem}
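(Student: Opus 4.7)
The plan is to prove this by the classical symmetrization-and-Rademacher route, finished off with a chaining argument to achieve the sharp $O(\sqrt{d/m})$ rate. First I would introduce an independent ``ghost sample'' $\wh{p}'_m$ consisting of $m$ additional draws from $p$. Since $p(A) = \E[\wh{p}'_m(A)]$ for every fixed $A$, pushing Jensen's inequality inside the supremum gives
$$\E\left[\norm{p-\wh{p}_m}_{\mathcal A}\right] \le \E\left[\norm{\wh{p}'_m-\wh{p}_m}_{\mathcal A}\right].$$
Writing $\wh{p}'_m(A) - \wh{p}_m(A) = \frac{1}{m}\littlesum_{i=1}^m (\mathbf{1}_{s'_i \in A} - \mathbf{1}_{s_i \in A})$, each summand is symmetric under the swap $(s_i,s'_i) \leftrightarrow (s'_i,s_i)$, so multiplying the $i$th term by an independent Rademacher sign $\sigma_i \in \{-1,+1\}$ leaves the joint distribution unchanged. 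A triangle inequality then reduces matters to the Rademacher complexity of $\mathcal A$ on the sample,
$$\E\left[\norm{p-\wh{p}_m}_{\mathcal A}\right] \;\le\; 2\,\E\left[\sup_{A \in \mathcal A} \frac{1}{m}\left|\littlesum_{i=1}^m \sigma_i \mathbf{1}_{s_i \in A}\right|\right].$$

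Next I would condition on $s = (s_1,\dots,s_m)$ and invoke the Sauer--Shelah lemma: as $A$ ranges over $\mathcal A$, the trace vectors $(\mathbf{1}_{s_1 \in A},\dots,\mathbf{1}_{s_m \in A})$ take at most $(em/d)^d$ distinct values, each with $\ell_2$-norm at most $\sqrt m$. Massart's finite-class lemma applied to this collection bounds $\E_\sigma[\sup_A |\langle \sigma, v_A\rangle|]$ by $O(\sqrt{m\, d \log(m/d)})$ conditional on $s$; dividing by $m$ and integrating over $s$ yields $O(\sqrt{d \log m / m})$, which matches the target rate up to a single $\sqrt{\log m}$ factor.

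Shaving this spurious $\sqrt{\log m}$ to reach the claimed $O(\sqrt{d/m})$ is the main obstacle. For this I would replace the Massart step by Dudley's chaining integral applied to the Rademacher process $Z_A = \frac{1}{\sqrt m}\littlesum_i \sigma_i \mathbf{1}_{s_i \in A}$, which is sub-Gaussian with respect to the empirical $L_2$ pseudo-metric on $\mathcal A$. By Haussler's refinement of Sauer--Shelah, the $\eps$-covering number of $\mathcal A$ in this metric is $(C/\eps)^{O(d)}$ uniformly in the sample, so $\int_0^1 \sqrt{\log \mathcal N(\eps)}\, d\eps = O(\sqrt d)$, and the full Rademacher complexity is $O(\sqrt{d/m})$, as desired. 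The chaining step is the only delicate piece; symmetrization, Rademacher reduction, and Sauer--Shelah are standard.
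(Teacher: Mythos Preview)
The paper does not prove this statement at all: Theorem~\ref{thm:vc-inequality} is quoted as a black-box result from Devroye--Lugosi \cite[p.~31]{DL:01} and is never argued in the text. So there is no ``paper's own proof'' to compare your proposal against.

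That said, your outline is a correct and standard route to the VC inequality. Symmetrization to a ghost sample, Rademacherization, and then Dudley chaining combined with Haussler's $L_2$ covering-number bound $\mathcal N(\eps) \le (C/\eps)^{O(d)}$ is exactly how one obtains the logarithm-free rate $O(\sqrt{d/m})$; the intermediate Massart step you describe indeed only gives $O(\sqrt{d\log(m/d)/m})$, and chaining is the canonical fix. Since the paper treats the theorem as an imported tool, any valid proof---yours included---is acceptable here.
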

\ignore{
\noindent {\bf The DKW Inequality.}
The \emph{ Dvoretzky-Kiefer-Wolfowitz (DKW) inequality}
(\cite{DKW56}) can be obtained as a consequence of the
VC inequality taking ${\cal A}$ to be the class of all intervals
(though getting the sharp constant that we provide
in the version below takes additional work).
The
DKW inequality states that for $m=\Omega((1/\eps^2)\cdot \ln(1/\delta))$,
with probability $1-\delta$
(over the draw of $m$ samples from $p$)
the {empirical distribution} $\wh{p}_m$ will be
$\eps$-close to $p$ in Kolmogorov distance.
This sample bound is asymptotically optimal and independent of the support size.
\begin{theorem}[\cite{DKW56,Massart90}] \label{thm:DKW}
Let $\widehat{p}_m$ be an empirical distribution of $m$ samples from $p$.
Then for all $\eps>0$, we have
$$\Pr [ \dk(p, \wh{p}_m) > \eps ] \leq 2e^{-2m\eps^2}.$$
\end{theorem}
}
\noindent {\bf Uniform convergence.}
We will also use the following uniform convergence bound:
\begin{theorem}[{\cite[p17]{DL:01}}]
\label{thm:bdd-diff}
Let $\mathcal A$ be a family of subsets over $[n]$, and $\widehat{p}_m$ be an
empirical distribution of $m$ samples from $p$.
Let $X$ be the random variable $\norm{p - \hat p}_{\mathcal A}$.
Then we have
$$ \Pr\left[ X - \E[X] > \eta \right] \leq e^{-2m\eta^2}.$$
\end{theorem}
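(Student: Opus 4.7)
The plan is to prove this as a standard application of McDiarmid's bounded differences inequality, viewing $X = \|p - \wh{p}_m\|_{\Acal}$ as a function $F(s_1, \ldots, s_m)$ of the $m$ i.i.d. samples drawn from $p$.

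First I would verify the bounded differences condition. Fix any coordinate $i \in [m]$ and any two possible values $s_i, s_i' \in [n]$; let $\wh{p}$ and $\wh{p}'$ be the two empirical distributions differing only in the $i$-th sample. For every $A \in \Acal$,
\[
|\wh{p}(A) - \wh{p}'(A)| = \frac{1}{m}\left|\mathbf{1}[s_i \in A] - \mathbf{1}[s_i' \in A]\right| \le \frac{1}{m},
\]
so $|p(A) - \wh{p}(A)|$ and $|p(A) - \wh{p}'(A)|$ differ by at most $1/m$. Taking the sup over $A \in \Acal$ and using $|\sup f - \sup g| \le \sup|f-g|$, we conclude $|F(s_1,\ldots,s_i,\ldots,s_m) - F(s_1,\ldots,s_i',\ldots,s_m)| \le 1/m$.

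Next I would invoke McDiarmid's inequality: for a function $F$ of $m$ independent inputs satisfying bounded differences $c_i$, one has $\Pr[F - \E[F] > \eta] \le \exp\bigl(-2\eta^2 / \sum_{i=1}^m c_i^2\bigr)$. Plugging in $c_i = 1/m$ gives $\sum c_i^2 = 1/m$, so the bound becomes $\exp(-2m\eta^2)$, which is exactly the claimed inequality.

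There is no serious obstacle; the only subtle point is confirming that the supremum over $\Acal$ in the definition of $X$ does not disrupt the bounded differences property, which is handled by the elementary inequality $|\sup_A f(A) - \sup_A g(A)| \le \sup_A |f(A) - g(A)|$ used above. Everything else is an immediate invocation of McDiarmid.
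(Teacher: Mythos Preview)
Your proposal is correct: this is exactly the standard McDiarmid bounded-differences argument, and your verification of the $1/m$ Lipschitz condition via $|\sup_A f(A) - \sup_A g(A)| \le \sup_A |f(A)-g(A)|$ is clean and complete. Note, however, that the paper does not actually prove this statement --- it is quoted as a known result from \cite[p.~17]{DL:01} and used as a black box --- so there is no ``paper's own proof'' to compare against; your argument simply supplies the (standard) proof that the cited reference contains.
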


\ignore{
}



\section{Learning mixtures of $(\eps,t)$-flat distributions}\label{sec:main}

In this section we present and analyze
our general algorithm for learning mixtures of $(\eps,t)$-flat distributions.
We proceed in stages by considering three increasingly
demanding learning scenarios, each of which builds on the previous one.

\subsection{First scenario: known flat decomposition.}
\label{sec:firstalg}
We start with the simplest scenario, in which the learning algorithm
is given a partition ${\cal P}$ which is a $(p,\eps,t)$-flat
decomposition of $[n]$ for the target distribution $p$ being learned.

\begin{framed}
Algorithm \textsc{Learn-Known-Decomposition}$(p,{\cal P},\eps,\delta)$:

{\bf Input:}  sample access to unknown distribution $p$ over $[n]$;
$(p,\eps,t)$-flat
decomposition ${\cal P}$ of $[n]$; accuracy parameter $\eps$;
confidence parameter $\delta$

\begin{enumerate}
 \item Draw $m=O((t + \log 1/\delta)/\eps^2)$ samples to obtain an
    empirical distribution $\widehat{p}_m$.
  \item Return $(\widehat{p}_m)^{\flattened(\mathcal{P})}$. 
\end{enumerate}
\end{framed}
\begin{theorem}
\label{thm:learn-known-decomposition}
Let $p$ be any unknown target distribution over $[n]$ and ${\cal P}$
be any $(p,\eps,t)$-flat decomposition of $[n].$
Algorithm $\textsc{Learn-Known-Decomposition}(p,{\cal P},\eps,\delta)$
draws $O((t + \log(1/\delta))/\eps^2)$ samples from $p$
and with probability at least $1-\delta$,
outputs $(\widehat{p}_m)^{\flattened(\mathcal{P})}$
such that $\dtv((\widehat{p}_m)^{\flattened(\mathcal{P})},p)
\leq 2\eps$.
Its running time is
$\tilde{O}((t + \log(1/\delta)) \cdot \log(n) /\eps^2)$ bit operations.
\end{theorem}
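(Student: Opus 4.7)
The plan is to bound $\dtv((\widehat p_m)^{\flattened({\cal P})}, p)$ by the triangle inequality, splitting
\[
\dtv\bigl((\widehat p_m)^{\flattened({\cal P})}, p\bigr) \le \dtv\bigl((\widehat p_m)^{\flattened({\cal P})}, p^{\flattened({\cal P})}\bigr) + \dtv\bigl(p^{\flattened({\cal P})}, p\bigr).
\]
The second term is at most $\eps$ by the hypothesis that ${\cal P}$ is a $(p,\eps,t)$-flat decomposition, so all the work goes into the first term. For this, I would first observe that flattening preserves the mass of each interval $I_j$, so a direct calculation from the definitions gives
\[
\dtv\bigl((\widehat p_m)^{\flattened({\cal P})}, p^{\flattened({\cal P})}\bigr) \;=\; \tfrac12 \sum_{j=1}^{t} \bigl| \widehat p_m(I_j) - p(I_j) \bigr|.
\]
The key next step is to identify this quantity as $\|\widehat p_m - p\|_{{\cal A}}$, where ${\cal A}$ is the family of all sets expressible as unions of intervals from ${\cal P}$: since $\widehat p_m$ and $p$ both sum to $1$, the optimizing $S \in {\cal A}$ is obtained by taking the union of exactly those $I_j$ on which $\widehat p_m(I_j) > p(I_j)$, giving $\sup_{S\in{\cal A}}|\widehat p_m(S)-p(S)| = \tfrac12\sum_j |\widehat p_m(I_j) - p(I_j)|$.

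Now I apply Theorem~\ref{thm:vc-inequality}: since ${\cal A}$ has at most $t$ ``atoms,'' its VC-dimension is at most $t$, so $\E[\|\widehat p_m - p\|_{\cal A}] \le C\sqrt{t/m}$. Choosing $m = \Omega((t + \log(1/\delta))/\eps^2)$ with a suitable constant makes this expectation at most $\eps/2$. To boost to a high-probability bound, I invoke the uniform convergence inequality (Theorem~\ref{thm:bdd-diff}) with $\eta = \eps/2$: the deviation probability is at most $e^{-2m(\eps/2)^2} = e^{-m\eps^2/2}$, which is $\le \delta$ once $m \ge (2\log(1/\delta))/\eps^2$. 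Combining yields $\dtv((\widehat p_m)^{\flattened({\cal P})}, p^{\flattened({\cal P})}) \le \eps$ with probability $1-\delta$, and the triangle inequality finishes the accuracy claim with the stated bound of $2\eps$.

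For the running-time bound, I would describe a straightforward implementation: store the endpoints of the $t$ intervals of ${\cal P}$ in a sorted array; for each of the $m = O((t+\log(1/\delta))/\eps^2)$ samples (each an element of $[n]$, hence $O(\log n)$ bits), use binary search to locate its interval in $O(\log t \cdot \log n)$ bit-operations, incrementing a counter for that interval; finally, output the histogram by listing, for each $I_j$, the common value $\widehat p_m(I_j)/|I_j|$. This yields the stated $\tilde O((t+\log(1/\delta))\log(n)/\eps^2)$ bit-operation bound. The only place any subtlety arises is in the identification of the flattened-TV distance with the $\mathcal{A}$-distance over the fixed family of at most $2^t$ interval unions; once that reduction is made, the rest is a clean application of the VC inequality and the bounded-differences uniform convergence bound already stated in Section~\ref{ssec:tools}.
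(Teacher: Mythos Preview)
Your proof is correct and follows essentially the same route as the paper: triangle inequality, then identify $\dtv\bigl((\widehat p_m)^{\flattened({\cal P})}, p^{\flattened({\cal P})}\bigr)$ with a $\|\cdot\|_{\mathcal A}$-distance and apply the VC inequality plus the uniform-convergence tail bound. The only cosmetic difference is that the paper bounds this by $\|p-\widehat p_m\|_{\mathcal A_t}$ where $\mathcal A_t$ is the family of \emph{all} unions of at most $t$ intervals (VC-dimension $2t$), rather than your smaller family of unions of intervals from the fixed partition $\mathcal P$; the paper's choice is made so that the resulting Proposition~\ref{prop:flattened-distance} can be reused later when the partition is data-dependent, but for the present theorem your tighter choice works just as well.
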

\begin{proof}
An application of the triangle inequality yields
$$\dtv \left( p, (\widehat{p}_m)^{\flattened(\mathcal{P})} \right) \leq \dtv \left( p, p^{\flattened(\mathcal{P})} \right) + \dtv \left( p^{\flattened(\mathcal{P})}, (\widehat{p}_m)^{\flattened(\mathcal{P})} \right).$$
The first term on the right-hand side is at most $\epsilon$ by the
definition of a $(p,\eps,t)$-flat decomposition. The second term is also at most $\epsilon$, as follows  by
Proposition \ref{prop:flattened-distance}, stated and proved below.
\end{proof}
\begin{proposition}
\label{prop:flattened-distance}
Let $p$ be any distribution over $[n]$ and
let $\widehat{p}_m$ be an empirical distribution of
$m=\Theta((s+\log 1/\delta)/\eps^2)$ samples from $p$.
Let $\partit P$ be any partition of $[n]$ into at most $s$ intervals.
Then with probability at least $1-\delta$,
$\dtv (p^{\flattened({\partit P})},  (\widehat{p}_m)^{\flattened({\partit P})}) \leq \eps.$
\end{proposition}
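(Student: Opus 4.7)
The plan is to reduce the claim to a VC-type uniform convergence statement for the class of unions of intervals in $\mathcal{P}$, and then apply the VC inequality together with the bounded-differences uniform convergence bound from Theorem 2.2.

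First I would unpack what the total variation distance between the two flattened distributions actually is. Within each interval $I_j \in \mathcal{P}$, both $p^{\flattened(\mathcal{P})}$ and $(\widehat{p}_m)^{\flattened(\mathcal{P})}$ are constant, equal to $p(I_j)/|I_j|$ and $\widehat{p}_m(I_j)/|I_j|$ respectively. Summing $|p^{\flattened(\mathcal{P})}(i) - (\widehat{p}_m)^{\flattened(\mathcal{P})}(i)|$ over $i\in I_j$ gives exactly $|p(I_j) - \widehat{p}_m(I_j)|$, so
\[
\dtv\!\left(p^{\flattened(\mathcal{P})},\, (\widehat{p}_m)^{\flattened(\mathcal{P})}\right) \;=\; \tfrac{1}{2}\sum_{j=1}^{s}\bigl|p(I_j) - \widehat{p}_m(I_j)\bigr|.
\]
Letting $\mathcal{A}$ be the family of all subsets of $[n]$ obtained as a union of intervals from $\mathcal{P}$, the quantity $\|p - \widehat{p}_m\|_{\mathcal{A}}$ is maximized by taking the union of those $I_j$ on which $p(I_j) - \widehat{p}_m(I_j)$ has a fixed sign; since both $p$ and $\widehat{p}_m$ are probability distributions the signed sum over all $j$ vanishes, and hence $\|p - \widehat{p}_m\|_{\mathcal{A}} = \tfrac{1}{2}\sum_j |p(I_j) - \widehat{p}_m(I_j)|$, which matches the TV distance above exactly.

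Next I would bound the VC dimension of $\mathcal{A}$: since every element of $\mathcal{A}$ is determined by which of the $s$ atoms $I_1,\dots,I_s$ it contains, the dimension is at most $s$. Applying Theorem 2.1 (the VC inequality) gives $\E[X] \leq C\sqrt{s/m}$, where $X = \|p - \widehat{p}_m\|_{\mathcal{A}}$ and $C$ is an absolute constant. Choosing $m = \Theta((s + \log(1/\delta))/\eps^2)$ with a sufficiently large constant makes $\E[X] \leq \eps/2$. Then I would invoke Theorem 2.2 with $\eta = \eps/2$ to conclude $\Pr[X > \eps] \leq \Pr[X - \E[X] > \eps/2] \leq e^{-m\eps^2/2} \leq \delta$ for the chosen $m$. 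Combining with the identity from the first step gives $\dtv(p^{\flattened(\mathcal{P})}, (\widehat{p}_m)^{\flattened(\mathcal{P})}) \leq \eps$ with probability at least $1-\delta$.

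There is no real obstacle here; the one place where a beginner might stumble is noticing that the half-sum of absolute differences $\sum_j |p(I_j) - \widehat{p}_m(I_j)|$ coincides with $\|p - \widehat{p}_m\|_{\mathcal{A}}$ because the signed sum over all intervals vanishes. The VC-dimension bound of $s$ and the constant choices in $m$ are standard, and the concentration argument is a direct invocation of the two quoted theorems.
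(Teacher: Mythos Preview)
Your argument is correct and follows the same outline as the paper's proof: express the TV distance as $\tfrac{1}{2}\sum_j|p(I_j)-\widehat p_m(I_j)|$, identify this with $\|p-\widehat p_m\|_{\mathcal A}$ for a suitable set family $\mathcal A$, apply the VC inequality (Theorem~\ref{thm:vc-inequality}) to bound the expectation, and then invoke Theorem~\ref{thm:bdd-diff} for the tail.

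The one substantive difference is your choice of $\mathcal A$. You take $\mathcal A$ to be the family of unions of atoms of the \emph{given} partition $\mathcal P$ (VC dimension $\le s$), whereas the paper takes the larger class $\mathcal A_s$ of \emph{all} unions of at most $s$ intervals in $[n]$ (VC dimension $2s$). Both choices prove the proposition as stated for a partition fixed in advance of the sampling. However, the paper's choice buys something extra: the high-probability event $\{\|p-\widehat p_m\|_{\mathcal A_s}\le\eps\}$ does not depend on $\mathcal P$, so the conclusion holds \emph{simultaneously} for every partition of $[n]$ into at most $s$ intervals. This uniformity is actually used in the proof of Theorem~\ref{thm:construct-decomposition}, where the proposition is invoked with a partition $\mathcal P$ that is built from the very same empirical distribution $\widehat p_m$; with your class $\mathcal A$, the VC family would be data-dependent and the VC inequality would not directly apply. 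So your proof is fine for the proposition as literally read, but the paper's slightly coarser VC class is chosen deliberately to support that later application.
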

\begin{proof}
By definition we have
$$\dtv (p^{\flattened({\partit P})} , (\widehat{p}_m)^{\flattened({\partit P})}) = (1/2) \littlesum_{I \in {\cal P}} |p(I) - \widehat{p}_m(I)| = \|p - \widehat{p}_m \|_A,$$
where $A = \bigcup \{I\in \mathcal{P} \mid p(I) > \widehat{p}_m(I) \}$.
Since $\partit P$ contains at most $s$ intervals, $A$ is a union
of at most $s$ intervals.
Consequently the above right-hand side
is at most $\norm{p - \widehat{p}_m}_{\mathcal A_s}$, where $\mathcal A_s$
is the family of all unions of at most $s$ intervals over $[n]$.\footnote{Formally, define $\mathcal A_1 = \{ [a,b]
\mid 1 \leq a \leq b \leq n\} \cup \{\emptyset\}$ as the collection of
all intervals over $[n]$, including the empty interval.
Then $\mathcal A_s = \{I_1\cup \dots\cup I_s\mid I_1, \dots, I_s\in \mathcal A_1 \}$.}
Since the VC-dimension of $\mathcal A_s$ is $2s$,
Theorem \ref{thm:vc-inequality} implies that the considered quantity has expected value at most $\eps$.
The claimed result now follows by applying Theorem \ref{thm:bdd-diff} with
$\eta = \eps.$ 
\end{proof}
\subsection{Second scenario:  unknown flat distribution.}
\label{sec:secondalg}
The second algorithm deals with the scenario in which the target
distribution $p$ is $(\eps/4,t)$-flat but no flat decomposition
is provided to the learner.  We show that in such a setting
we can construct a $(p,\eps,O(t/\eps))$-flat decomposition ${\cal P}$
of $[n]$, and then we can simply use this ${\cal P}$ to run
\textsc{Learn-Known-Decomposition}.


The basic subroutine \textsc{Right-Interval} will be useful here (and later).
It takes as input an explicit description of a distribution $q$ over $[n]$,
an interval $J=[a,b] \subseteq [n]$, and a threshold $\tau>0.$
It returns the longest interval in $[a,b]$ that ends at $b$ and
has mass at most $\tau$ under $q$.
If no such interval exists then $q(b)$ must exceeds $\tau$, and the
subroutine simply returns the singleton interval $[b,b]$.
\begin{framed}
Subroutine \textsc{Right-Interval}$(q, J, \tau)$:

{\bf Input:}  explicit description of distribution $q$; interval $J=[a,b]$;
threshold $\tau$

\begin{enumerate}
\item If $q(b) > \tau$ then set $i'=b$, otherwise set $i' = \min\{a \leq i\leq b\mid q([i, b]) \leq \tau \}$.
\item Return $[i',b]$.
\end{enumerate}
\end{framed}
The algorithm to construct a decomposition is given below:
\begin{framed}
Algorithm \textsc{Construct-Decomposition}$(p,\tau,\eps,\delta)$:

{\bf Input:}  sample access to unknown distribution $p$ over
$[n]$; parameter $\tau$;\\
accuracy parameter $\eps$; confidence parameter $\delta$
\begin{enumerate}
  \item Draw $m = O((1/\tau + \log 1/\delta)/\eps^2)$ samples to obtain an
    empirical distribution $\widehat{p}_m$.
  \item Set $J = [n], \partit P = \emptyset$.
  \item While $J\neq \emptyset$:
  \begin{enumerate}
    \item Let $I$ be the interval returned by \textsc{Right-Interval}$(\widehat{p}_m, J, \tau)$.
    \item Add $I$ to $\partit P$ and set $J = J\setminus I$.
  \end{enumerate}
  \item Return ${\cal P}.$
\end{enumerate}
\end{framed}
\begin{theorem}\label{thm:construct-decomposition}
Let $\mathfrak{C}$ be a class of $(\eps/4,t)$-flat
distributions over $[n]$.
Then for any $p \in \mathfrak{C}$,
Algorithm~\textsc{Construct-Decomposition}$(p,\eps/(4t),\eps,\delta)$
draws $O(t/\epsilon^3 + \log(1/\delta)/\eps^2)$ samples from $p$, and
with probability at least $1 - \delta$ outputs a $(p,\eps,8t/\eps)$-flat
decomposition ${\cal P}$ of $[n]$.
Its running time is
$\tilde{O}((1/\tau + \log(1/\delta)) \cdot \log(n) /\eps^2)$ bit operations.
\end{theorem}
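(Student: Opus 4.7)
The plan is to establish three things: (i) the output partition satisfies $|\mathcal{P}| \le 8t/\eps$; (ii) $\dtv(p, p^{\flattened(\mathcal{P})}) \le \eps$ with probability $\ge 1-\delta$; and (iii) the stated sample and time bounds. The sample count $m = \Theta(t/\eps^3 + \log(1/\delta)/\eps^2)$ is exactly what is needed to apply the VC--type concentration of Theorems~\ref{thm:vc-inequality} and~\ref{thm:bdd-diff} to the class $\mathcal{A}_t$ of unions of at most $t$ intervals (which has VC--dimension $2t$): with this $m$, $\E[\|p-\wh{p}_m\|_{\mathcal{A}_t}] = O(\sqrt{t/m}) = O(\eps^{3/2})$, and so $\|p-\wh{p}_m\|_{\mathcal{A}_t} \le \eps/4$ except with probability $\delta$.

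For (i), label the intervals in the order the algorithm produces them as $I_1, I_2, \ldots, I_k$ (so $I_1$ is rightmost, and $I_{j+1}$ abuts $I_j$ on the left). The \textsc{Right-Interval} subroutine ensures that $I_j$ is either a heavy singleton $\{b_j\}$ with $\wh{p}_m(b_j) > \tau$, or satisfies $\wh{p}_m(I_j) \le \tau$ together with the property that its one-point left-extension has $\wh{p}_m$-mass exceeding $\tau$; since that extra point is the right endpoint of $I_{j+1}$, in either case $\wh{p}_m(I_j) + \wh{p}_m(I_{j+1}) > \tau$ for every $j < k$. Summing these $k-1$ inequalities yields $(k-1)\tau < \sum_{j=1}^{k-1}\bigl(\wh{p}_m(I_j)+\wh{p}_m(I_{j+1})\bigr) = 2 - \wh{p}_m(I_1) - \wh{p}_m(I_k) < 2$, so $k \le 2/\tau = 8t/\eps$, giving (i).

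For (ii), let $\mathcal{Q} = \{Q_1,\ldots,Q_t\}$ be the unknown $(p,\eps/4,t)$-flat decomposition guaranteed by hypothesis, and classify each $I_r \in \mathcal{P}$ as: (A) a singleton, (B) multi-point with $I_r \subseteq Q_j$ for some $j$, or (C) multi-point straddling at least one $\mathcal{Q}$-boundary. Since the $I_r$'s are disjoint and $\mathcal{Q}$ has $t-1$ internal boundaries, at most $t-1$ intervals are of type (C). Type-(A) singletons contribute $0$ to $\|p - p^{\flattened(\mathcal{P})}\|_1$. For a type-(B) interval with $I_r \subseteq Q_{q(r)}$, the identity $p(I_r) - |I_r|\cdot p(Q_{q(r)})/|Q_{q(r)}| = \sum_{i \in I_r}(p(i) - p(Q_{q(r)})/|Q_{q(r)}|)$ combined with a triangle inequality gives $\sum_{i\in I_r}|p(i) - p(I_r)/|I_r|| \le 2\sum_{i\in I_r}|p(i) - p(Q_{q(r)})/|Q_{q(r)}||$; summing over the disjoint type-(B) intervals bounds their total contribution to $\|p-p^{\flattened(\mathcal{P})}\|_1$ by $4\,\dtv(p, p^{\flattened(\mathcal{Q})}) \le \eps$. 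For type-(C) intervals, the trivial bound $\sum_{i\in I_r}|p(i) - p(I_r)/|I_r|| \le 2p(I_r)$ reduces matters to bounding $p(S)$ where $S = \bigcup_{\text{type (C)}} I_r$; each non-singleton interval has $\wh{p}_m$-mass at most $\tau = \eps/(4t)$, so $\wh{p}_m(S) \le (t-1)\tau < \eps/4$. Since $S \in \mathcal{A}_t$, the concentration statement above gives $p(S) \le \eps/2$, so type-(C) contributes at most $\eps$. Combining the three types, $\|p - p^{\flattened(\mathcal{P})}\|_1 \le 2\eps$, i.e., $\dtv(p, p^{\flattened(\mathcal{P})}) \le \eps$, as required.

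The hard part will be the type-(C) accounting: transferring $\wh{p}_m$-lightness of $S$ to $p$-lightness via uniform convergence over unions of $t$ intervals is what drives the $\Theta(t/\eps^3)$ sample complexity, and one has to be careful that $S$ is data-dependent so genuinely requires the uniform VC bound rather than a per-set Chernoff argument. For (iii), the sample complexity is exactly what the algorithm draws, and the running time follows from sorting the $m$ samples (or storing $\wh{p}_m$ as a balanced search tree) in $\tilde O(m \log n)$ bit-operations and performing $O(t/\eps)$ binary-search queries into $\wh{p}_m$ inside the while-loop, each at cost $\tilde O(\log n)$ bit-operations.
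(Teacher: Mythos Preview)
Your proof is correct and structurally parallel to the paper's: both bound $|\mathcal{P}|$ via Observation~\ref{obs:intervalmass}, classify the intervals of $\mathcal{P}$ according to whether they are split by the hidden $(p,\eps/4,t)$-flat decomposition, observe that at most $t$ intervals are split and each non-singleton has $\widehat{p}_m$-mass at most $\tau$, and use VC concentration to pass from $\widehat{p}_m$-lightness to $p$-lightness. The differences are in packaging. For the unsplit (your type-(B)) intervals, the paper introduces the common refinement $\mathcal{Q}$ and invokes the refinement lemma (Lemma~\ref{lem:DDSVV11}) to get $\dtv(p,p^{\flattened(\mathcal{Q})})\le\eps/2$; you bypass this machinery with the elementary inequality ``the mean is within a factor~$2$ of any constant in $L_1$,'' which is a cleaner and more self-contained route to the same bound. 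For the split (type-(C)) intervals, the paper transfers mass from $\widehat{p}_m$ to $p$ via Proposition~\ref{prop:flattened-distance} applied to all of $\mathcal{P}$ (hence VC over $\mathcal{A}_{8t/\eps}$), whereas you apply the VC bound only to the union $S$ of the at most $t$ type-(C) intervals, i.e., over the smaller class $\mathcal{A}_t$. Your version thus actually needs fewer samples for the concentration step than the algorithm draws; this buys nothing for the theorem as stated (since the algorithm's sample count is fixed), but it does show that the $t/\eps^3$ term is forced only by the need to make $|\mathcal{P}|=O(t/\eps)$ many intervals light under $\widehat{p}_m$, not by the concentration step itself.
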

To prove the above theorem we will need the following
elementary fact about refinements:

\begin{lemma} [{\cite[Lemma~4]{DDSVV13:soda}}]
\label{lem:DDSVV11}
Let $p$ be any distribution over $[n]$ and let
$\mathcal{I} = \{I_{i}\}_{i=1}^t$
be a $(p, \epsilon, t)$-flat decomposition of $[n]$. If $\mathcal{J} =
\{J_{i}\}_{i=1}^{t'}$ is a refinement of $\mathcal{I}$,
then $\mathcal{J}$ is a $(p, 2\epsilon, t')$-flat decomposition of $[n]$.
\end{lemma}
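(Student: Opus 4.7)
The plan is to bound $\dtv(p, p^{\flattened(\mathcal{J})})$ by $2\epsilon$ via the triangle inequality, routed through the intermediate distribution $p^{\flattened(\mathcal{I})}$. Concretely, I would write
\[
\dtv\bigl(p, p^{\flattened(\mathcal{J})}\bigr) \;\leq\; \dtv\bigl(p, p^{\flattened(\mathcal{I})}\bigr) + \dtv\bigl(p^{\flattened(\mathcal{I})}, p^{\flattened(\mathcal{J})}\bigr),
\]
where the first term is at most $\epsilon$ directly by the hypothesis that $\mathcal{I}$ is a $(p,\epsilon,t)$-flat decomposition. The whole argument then reduces to showing that the second term is also at most $\epsilon$.

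The key structural observation is that because $\mathcal{J}$ refines $\mathcal{I}$, the two flattened distributions $p^{\flattened(\mathcal{I})}$ and $p^{\flattened(\mathcal{J})}$ assign exactly the same total mass $p(I_i)$ to each $I_i \in \mathcal{I}$. Fix $I_i \in \mathcal{I}$ and let $\{J_b\}_{b \in S_i}$ denote the sub-intervals of $\mathcal{J}$ whose union is $I_i$. On each such $J_b$, the functions $p^{\flattened(\mathcal{I})}$ and $p^{\flattened(\mathcal{J})}$ are both \emph{constant} (with values $p(I_i)/|I_i|$ and $p(J_b)/|J_b|$ respectively), so
\[
\sum_{k \in J_b} \bigl| p^{\flattened(\mathcal{I})}(k) - p^{\flattened(\mathcal{J})}(k) \bigr| \;=\; \bigl| p(J_b) - p^{\flattened(\mathcal{I})}(J_b) \bigr|.
\]
Summing this identity over $b \in S_i$ and then over $i$ gives
\[
2\,\dtv\bigl(p^{\flattened(\mathcal{I})}, p^{\flattened(\mathcal{J})}\bigr) \;=\; \sum_{J_b \in \mathcal{J}} \bigl| p(J_b) - p^{\flattened(\mathcal{I})}(J_b) \bigr|.
\]

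The final step is to compare this right-hand side to the $L_1$ distance $\|p - p^{\flattened(\mathcal{I})}\|_1 = 2\,\dtv(p, p^{\flattened(\mathcal{I})}) \le 2\epsilon$. This follows from the triangle inequality applied inside each $J_b$: $|p(J_b) - p^{\flattened(\mathcal{I})}(J_b)| \leq \sum_{k \in J_b} |p(k) - p^{\flattened(\mathcal{I})}(k)|$, and summing over all $J_b$ just recovers $\|p - p^{\flattened(\mathcal{I})}\|_1$. This yields $\dtv(p^{\flattened(\mathcal{I})}, p^{\flattened(\mathcal{J})}) \leq \dtv(p, p^{\flattened(\mathcal{I})}) \leq \epsilon$, which combined with the triangle inequality above closes the proof.

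There is no real obstacle here; the only subtlety is noticing that the TV distance between the two flattened distributions can be collapsed onto the coarser partition $\mathcal{I}$ (more precisely, that the pointwise differences are constant on each $J_b$), which is what makes the comparison with $\|p - p^{\flattened(\mathcal{I})}\|_1$ go through. Everything else is bookkeeping with the triangle inequality.
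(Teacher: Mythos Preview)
Your proof is correct. The paper does not actually prove this lemma; it is quoted without proof from \cite{DDSVV13:soda}, so there is no in-paper argument to compare against. Your triangle-inequality route through $p^{\flattened(\mathcal{I})}$, together with the observation that $\dtv(p^{\flattened(\mathcal{I})}, p^{\flattened(\mathcal{J})}) = \tfrac{1}{2}\sum_{J_b} |p(J_b) - p^{\flattened(\mathcal{I})}(J_b)| \le \dtv(p, p^{\flattened(\mathcal{I})})$, is exactly the natural elementary argument and goes through cleanly.
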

We will also use the following simple observation about the
\textsc{Right-Interval} subroutine:


\begin{observation}
\label{obs:intervalmass}
Suppose \textsc{Right-Interval}$(q, J, \tau)$ returns an interval
$I\neq J$ and \textsc{Right-Interval}$(q, J\setminus I, \tau)$
returns $I'$.
Then $q(I) + q(I') > \tau$.
\end{observation}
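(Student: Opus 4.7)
The plan is to unpack the definition of \textsc{Right-Interval} and argue by cases on which branch of the subroutine produces $I$. Write $J=[a,b]$ and let $I=[i',b]$ be the returned interval; since $I \neq J$ we have $i' > a$, and in particular $i'-1$ is a valid index lying in $J$.

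First I would dispose of the easy case where the first branch of \textsc{Right-Interval} fires, i.e., $q(b) > \tau$. Then $I = [b,b]$ and already $q(I) > \tau$, so the conclusion $q(I) + q(I') > \tau$ holds trivially since $q(I') \geq 0$, regardless of what the second call returns.

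The substantive case is $q(b) \leq \tau$, in which $i' = \min\{a \leq i \leq b \mid q([i,b]) \leq \tau\}$. Combined with $i' > a$, minimality of $i'$ forces $q([i'-1,b]) > \tau$, which I would rewrite as $q(i'-1) + q(I) > \tau$. I would then analyze the second call $\textsc{Right-Interval}(q, [a, i'-1], \tau)$. The key observation is that in either branch of this call the returned interval $I'$ contains the right endpoint $i'-1$: if $q(i'-1) > \tau$ then directly $I' = [i'-1, i'-1]$, and otherwise $I' = [i'', i'-1]$ with $i'' \leq i'-1$. In both sub-cases $q(I') \geq q(i'-1)$, and combining with $q(i'-1) + q(I) > \tau$ yields the desired inequality $q(I) + q(I') > \tau$.

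The only (very mild) obstacle is making sure the boundary case $i' = a$ is excluded so that $i'-1$ is a legitimate point on which to evaluate $q$; this is exactly what the hypothesis $I \neq J$ provides. Beyond that, the proof is a purely mechanical unpacking of the two conditional branches of the subroutine.
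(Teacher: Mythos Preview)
Your argument is correct; the paper does not actually supply a proof of this observation, treating it as self-evident, and your case analysis on the two branches of \textsc{Right-Interval} is exactly the natural way to verify it.
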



\begin{proof}[Proof of Theorem \ref{thm:construct-decomposition}]
Let $\tau \eqdef \eps/(4t).$
By Observation \ref{obs:intervalmass}, the partition ${\cal P}$ that
the algorithm constructs must contain at most $2/\tau$ intervals.
Let $\partit Q$ be the common refinement of $\partit P$ and a
$(p, \epsilon/4, t)$-flat decomposition of $[n]$ (the existence of
such a decomposition is guaranteed because every distribution in
$\mathfrak{C}$ is $(\eps/4,t)$-flat).
Now note that $$\dtv (p, p^{\flattened(\partit P)}) \leq \dtv(p, p^{\flattened({\cal Q})}) + \dtv(p^{\flattened({\cal Q})}, p^{\flattened({\cal P})}).$$
Since $\partit Q$ is a refinement of the $(p, \epsilon/4, t)$-flat
decomposition of $[n]$, Lemma~\ref{lem:DDSVV11} implies that the first term
on the RHS is at most $\eps/2$.
It remains to bound
$\Delta = \dtv(p^{\flattened({\cal Q})}, p^{\flattened({\cal P})}).$
Fix any interval $I \in {\cal P}$ and let us
consider the contribution
$$(1/2) \littlesum_{j \in I} \big| p^{\flattened({\cal Q})}(j) - p^{\flattened({\cal P})}(j) \big|$$
of $I$ to $\Delta$.
If $I\in \partit P\cap \partit Q$ then the contribution to $\Delta$ is zero;
on the other hand, if $I \in {\cal P} \setminus {\cal Q}$ then the
contribution to $\Delta$ is at most $p(I)/2$. Thus the total contribution summed across all $I \in {\cal P}$ is
at most $(1/2) \littlesum_{I\in \partit P\setminus \partit Q} p(I)$.
Now we observe that with probability at least $1-\delta$ we have
\begin{equation}
\label{eqn:p-minus-q}
(1/2) \littlesum_{I\in \partit P\setminus \partit Q} p(I) \leq \eps/4 + (1/2) \littlesum_{I\in \partit
  P\setminus \partit Q} \widehat{p}_{m}(I),
\end{equation}
where the inequality follows from the fact that
$\dtv \left( p^{\flattened({\cal P})} , (\widehat{p}_m)^{\flattened({\cal P})} \right) \leq \eps/4$
by Proposition~\ref{prop:flattened-distance}.
If $I\in \partit P\setminus \partit Q$ then $I$ cannot be a singleton, and
hence $\widehat{p}_m(I) \leq \tau$ by definition of \textsc{Right-Interval}.
Finally, it is easy to see that at most $t$
intervals $I$ in $\partit P$ do not belong to $\partit Q$
(because ${\cal Q}$ is the common refinement of ${\cal P}$
and a partition of $[n]$ into at most $t$ intervals).
Thus the second term on RHS of \eqref{eqn:p-minus-q} is at most $t\tau =
\eps/4$.
Hence $\Delta \leq \eps/2$ and the theorem is proved.
\end{proof}
Our algorithm to learn an unknown $(\eps/4,t)$-flat distribution is now very simple:
\begin{framed}
Algorithm \textsc{Learn-Unknown-Decomposition}$(p,t,\eps,\delta)$:

{\bf Input:}  sample access to unknown distribution $p$ over $[n]$; parameter $t$; \\
accuracy parameter $\eps$; confidence parameter $\delta$
\begin{enumerate}
  \item Run \textsc{Construct-Decomposition}$(p,\eps/(4t),\eps,\delta/2)$ to obtain a $(p,\eps,8t/\eps)$-flat decomposition ${\cal P}$ of $[n]$.
  \item Run \textsc{Learn-Known-Decomposition}$(p,{\cal P},\eps,\delta/2)$ and return the hypothesis $h$ that it outputs.
\end{enumerate}
\end{framed}
The following is now immediate:
\begin{theorem}\label{thm:learn-unknown-decomposition}
Let $\mathfrak{C}$ be a class of $(\eps/4,t)$-flat
distributions over $[n]$.
Then for any $p \in \mathfrak{C}$,
Algorithm~\textsc{Learn-Unknown-Decomposition}$(p,t,\eps,\delta)$
draws $O(t/\epsilon^3 + \log(1/\delta)/\eps^2)$ samples from $p$, and
with probability at least $1 - \delta$ outputs a hypothesis distribution $h$
satisfying $\dtv(p,h) \leq \eps.$
Its running time is
$\tilde{O}(\log(n) \cdot (t/\eps^3 + \log(1/\delta)/\eps^2))$ bit operations.
\end{theorem}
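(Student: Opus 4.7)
The plan is to deduce Theorem~\ref{thm:learn-unknown-decomposition} directly by chaining the two previously-established theorems, Theorem~\ref{thm:construct-decomposition} and Theorem~\ref{thm:learn-known-decomposition}, together with a union bound over their failure events. Since the statement follows essentially mechanically from the building blocks already proved, there is no real obstacle; the only care needed is in bookkeeping the accuracy and confidence parameters so that the final bounds come out as claimed.

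First, I would invoke Theorem~\ref{thm:construct-decomposition} on $p$ with threshold $\tau = \eps/(4t)$ and confidence $\delta/2$, applied with accuracy parameter (say) $\eps' = \eps/3$ in place of $\eps$. This uses $O(t/\eps^3 + \log(1/\delta)/\eps^2)$ samples and, with probability at least $1-\delta/2$, returns a partition $\mathcal{P}$ that is a $(p, \eps/3, 8t/\eps)$-flat decomposition of $[n]$. Here the hypothesis that $p$ is $(\eps/4,t)$-flat (and hence, up to constants, $(\eps/3,t)$-flat after rescaling parameters accordingly) is what licenses the application of Theorem~\ref{thm:construct-decomposition}.

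Second, conditioned on this good event, I would feed $\mathcal{P}$ into Algorithm~\textsc{Learn-Known-Decomposition} with accuracy parameter $\eps/3$ and confidence $\delta/2$. By Theorem~\ref{thm:learn-known-decomposition}, with probability at least $1-\delta/2$ this draws an additional $O((t/\eps + \log(1/\delta))/\eps^2) = O(t/\eps^3 + \log(1/\delta)/\eps^2)$ samples and outputs a flattened empirical hypothesis $h = (\widehat{p}_m)^{\flattened(\mathcal{P})}$ satisfying $\dtv(h, p) \leq 2(\eps/3) < \eps$. A union bound gives overall failure probability at most $\delta$. Adding sample complexities and runtimes of the two stages yields the stated $O(t/\eps^3 + \log(1/\delta)/\eps^2)$ sample bound and $\tilde{O}(\log(n) \cdot (t/\eps^3 + \log(1/\delta)/\eps^2))$ bit-operation bound, since both subroutines run in time $\tilde{O}(\log(n))$ per sample up to lower-order factors.

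The one place requiring a sentence of justification is why the overall variation distance comes out to at most $\eps$ rather than $2\eps$: this is simply absorbed by running both subroutines with accuracy $\eps/3$ (or any fixed constant fraction of $\eps$), which changes the sample complexity only by a constant factor hidden in the $O(\cdot)$. Thus the theorem follows with essentially no further work beyond invoking Theorem~\ref{thm:construct-decomposition} and Theorem~\ref{thm:learn-known-decomposition} in sequence.
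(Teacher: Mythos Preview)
Your approach is exactly the paper's: the paper states this theorem as ``immediate'' and gives no further argument, since the algorithm is literally the composition of \textsc{Construct-Decomposition} (Theorem~\ref{thm:construct-decomposition}) followed by \textsc{Learn-Known-Decomposition} (Theorem~\ref{thm:learn-known-decomposition}), with confidence $\delta/2$ in each and a union bound.

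One small wrinkle in your bookkeeping: your fix for getting $\dtv(p,h)\le\eps$ rather than $2\eps$ --- running both subroutines at accuracy $\eps/3$ --- does not go through as written. Theorem~\ref{thm:construct-decomposition} with accuracy parameter $\eps'=\eps/3$ requires the class to be $(\eps'/4,t)=(\eps/12,t)$-flat, whereas the hypothesis here only gives $(\eps/4,t)$-flatness; being $(\eps/4,t)$-flat does \emph{not} imply being $(\eps/12,t)$-flat. In fact the algorithm as written in the paper (calling both subroutines with accuracy $\eps$) yields $\dtv(p,h)\le 2\eps$ via Theorem~\ref{thm:learn-known-decomposition}, so the same looseness is present in the paper's own statement. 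It is harmless up to constants, and the paper resolves it cleanly in Corollary~\ref{cor:main} by strengthening the flatness hypothesis to $(\eps/8,t)$.
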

\subsection{Main result (third scenario): learning a mixture of flat distributions.}
\label{sec:mainalg}
We have arrived at the scenario of real interest to us, namely learning
an unknown mixture of $k$ distributions each of which has
an (unknown) flat decomposition.  The key to learning such distributions
is the following structural result, which says that any such mixture
must itself have a flat decomposition:

\begin{lemma}\label{lem:mixture}
Let $\mathfrak{C}$ be a class of $(\eps,t)$-flat distributions over
$[n]$, and let $p$ be any $k$-mixture of distributions in
$\mathfrak{C}.$
Then $p$ is a $(2 \eps, kt)$-flat distribution.
\end{lemma}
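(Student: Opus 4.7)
My plan is to merge good flat decompositions of the individual mixture components into a single partition that works for the whole mixture. Write $p = \sum_{i=1}^k \mu_i p_i$ with $p_i \in \mathfrak{C}$, and pick a $(p_i,\eps,t)$-flat decomposition $\mathcal{P}_i$ of $[n]$ for each component (these exist because $\mathfrak{C}$ consists of $(\eps,t)$-flat distributions). I then let $\mathcal{P}$ be the common refinement of $\mathcal{P}_1,\dots,\mathcal{P}_k$, built iteratively using the pairwise construction from Section~\ref{sec:prelims}. Each pairwise step adds at most $t$ new intervals, so after $k-1$ iterations $|\mathcal{P}| \le kt$, which is the target bound on partition size.

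The key structural observation is that flattening commutes with mixtures: for any partition $\mathcal{P}$ and any $I \in \mathcal{P}$, both $p^{\flattened(\mathcal{P})}$ and $\sum_i \mu_i p_i^{\flattened(\mathcal{P})}$ take the value $\sum_i \mu_i p_i(I)/|I| = p(I)/|I|$ on every point of $I$, so $p^{\flattened(\mathcal{P})} = \sum_i \mu_i p_i^{\flattened(\mathcal{P})}$. Since $\mathcal{P}$ refines each $\mathcal{P}_i$, Lemma~\ref{lem:DDSVV11} (applied to each $p_i$) yields that $\mathcal{P}$ is a $(p_i, 2\eps, |\mathcal{P}|)$-flat decomposition of $[n]$, i.e., $\dtv\bigl(p_i, p_i^{\flattened(\mathcal{P})}\bigr) \le 2\eps$ for every $i \in [k]$.

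Combining the two facts with the triangle inequality and the convexity of $\dtv$,
\[
\dtv\bigl(p, p^{\flattened(\mathcal{P})}\bigr) \;=\; \dtv\Bigl(\textstyle\sum_i \mu_i p_i,\; \sum_i \mu_i p_i^{\flattened(\mathcal{P})}\Bigr) \;\le\; \sum_i \mu_i \cdot \dtv\bigl(p_i, p_i^{\flattened(\mathcal{P})}\bigr) \;\le\; 2\eps,
\]
so $\mathcal{P}$ is a $(p, 2\eps, kt)$-flat decomposition of $[n]$, as claimed.

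There is no serious obstacle in this argument; it is essentially bookkeeping. The only thing to be careful about is the interval count: one might worry that refining $k$ partitions could blow up multiplicatively, but the additive pairwise bound from Section~\ref{sec:prelims} guarantees that iterated common refinement produces at most $kt$ intervals, matching exactly the claimed size. The factor of $2$ in the statement is inherited directly from Lemma~\ref{lem:DDSVV11} and is absorbed into the mixture bound without any further loss.
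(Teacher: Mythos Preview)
Your proof is correct and follows essentially the same approach as the paper: take the common refinement $\mathcal{P}$ of the individual flat decompositions, bound its size by $kt$, apply Lemma~\ref{lem:DDSVV11} to each component, and conclude via convexity of $\dtv$. Your explicit remark that flattening commutes with mixtures and your iterative accounting for the interval count are minor elaborations of steps the paper leaves implicit, but the argument is the same.
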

\begin{proof}
Let $p=\sum_{j=1}^k \mu_j p_j$ be a $k$-mixture of components
$p_1,\dots,p_k \in \mathfrak{C}.$  Let ${\cal P}_j$ denote
the $(p_j, \epsilon, t)$-flat decomposition of $[n]$ corresponding
to $p_j$,
and let $\mathcal{P}$ be the common
refinement of $\mathcal{P}_1, \mathcal{P}_2, \dots, \mathcal{P}_k$.
It is clear that $\mathcal{P}$ contains at most $kt$ intervals.
By Lemma \ref{lem:DDSVV11}, $\mathcal{P}$ is a $(p_j, 2\epsilon, kt)$-flat
decomposition for every $p_j$. Hence we have
\begin{eqnarray}
\dtv \left( p , p^{\flattened({\partit P})} \right) &=& \dtv \left( \littlesum_{j=1}^k \mu_j p_j, \littlesum_{j=1}^k \mu_j (p_j)^{\flattened(\partit P)} \right) \nonumber \\ 
&\leq& \littlesum_{j=1}^k \mu_j \dtv \left( p_j, (p_j)^{\flattened({\cal P})} \right) \label{eqn:triangle}\\ 
&\leq& 2\eps \label{eqn:cc}
\end{eqnarray}
where (\ref{eqn:triangle}) is the triangle inequality and (\ref{eqn:cc}) follows from the fact that the expression in (\ref{eqn:triangle}) is 
a nonnegative convex combination of terms bounded from above by $2\eps$.
\end{proof}

Given Lemma \ref{lem:mixture}, the desired mixture learning algorithm
follows immediately from the results of the previous subsection:

\begin{corollary}[see Theorem~\ref{thm:general}]
\label{cor:main}
Let $\mathfrak{C}$ be a class of $(\eps/8,t)$-flat distributions over $[n]$,
and let $p$ be any $k$-mixture of distributions in $\mathfrak{C}$.
Then Algorithm~\textsc{Learn-Unknown-Decomposition}$(p,kt,\eps,\delta)$
draws $O(kt/\eps^3 + \log(1/\delta)/\eps^2)$ samples from $p$,
and with probability at least $1 - \delta$ outputs a hypothesis
distribution $h$ satisfying $\dtv(p,h) \leq \eps.$
Its running time is
$\tilde{O}(\log(n) \cdot (kt/\eps^3 + \log(1/\delta)/\eps^2))$
bit operations.
\end{corollary}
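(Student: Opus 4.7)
The plan is to obtain this corollary as an immediate consequence of Lemma~\ref{lem:mixture} combined with Theorem~\ref{thm:learn-unknown-decomposition}, after verifying that the parameters line up. The substantive content is already packaged in these two earlier results, so the proof will really amount to a parameter-chasing argument rather than new work.

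First, I would apply Lemma~\ref{lem:mixture} with $\eps/8$ playing the role of $\eps$. Since by hypothesis every component distribution $p_j$ in the mixture $p = \sum_{j=1}^k \mu_j p_j$ lies in $\mathfrak{C}$ and is therefore $(\eps/8, t)$-flat, the lemma guarantees that the mixture $p$ itself is $(2 \cdot \eps/8, kt) = (\eps/4, kt)$-flat. This is exactly the structural precondition required by Theorem~\ref{thm:learn-unknown-decomposition}, with $kt$ playing the role of $t$ there.

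Second, I would invoke Theorem~\ref{thm:learn-unknown-decomposition} on the class of $(\eps/4, kt)$-flat distributions containing $p$, run with parameters $kt$, $\eps$, $\delta$. That theorem guarantees that \textsc{Learn-Unknown-Decomposition}$(p, kt, \eps, \delta)$ draws $O(kt/\eps^3 + \log(1/\delta)/\eps^2)$ samples, runs in $\tilde{O}(\log(n) \cdot (kt/\eps^3 + \log(1/\delta)/\eps^2))$ bit operations, and with probability at least $1-\delta$ outputs a hypothesis $h$ with $\dtv(p,h) \leq \eps$. Reading off these bounds yields precisely the statement of the corollary.

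There is essentially no hard step here; the only thing to be careful about is the factor-of-two loss incurred by the common-refinement argument in Lemma~\ref{lem:mixture}, which is exactly why we assume the components are $(\eps/8,t)$-flat rather than $(\eps/4,t)$-flat. All work is then inherited from the two previous results, so no additional probabilistic or algorithmic analysis is needed.
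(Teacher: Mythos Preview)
Your proposal is correct and is exactly the paper's approach: the paper simply remarks that, given Lemma~\ref{lem:mixture}, the corollary ``follows immediately from the results of the previous subsection,'' i.e., Theorem~\ref{thm:learn-unknown-decomposition} with $kt$ in place of $t$. Your parameter-chasing (the $\eps/8 \to \eps/4$ doubling from the common-refinement argument matching the hypothesis of Theorem~\ref{thm:learn-unknown-decomposition}) is precisely the one point that needs checking, and you have it right.
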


\section{Learning mixtures of log-concave distributions}
\label{sec:logconcave}

In this section we apply our general method from Section~\ref{sec:main}
to learn \emph{log-concave} distributions  over $[n]$ and mixtures of
such distributions. We start with a formal definition:


\begin{definition} \label{def:logconcave}
A probability distribution $p$ over $[n]$ is said to be \emph{log-concave}
if it satisfies the following conditions:
(i) if $1 \leq i < j < k \leq n$ are such that $p(i)p(k)>0$ then
$p(j) > 0$; and (ii) $p(k)^2 \geq p(k-1)p(k+1)$ for all $k \in [n].$
\end{definition}

We note that while some of the literature on discrete log-concave distributions
states that the definition consists solely of item (ii) above, item (i)
is in fact necessary as well since without it log-concave distributions
need not even be unimodal (see the discussion following
Definition 2.3 of \cite{BJR11}).

In Section~\ref{sec:logconcave-decomposition} we give an efficient algorithm
which constructs an $(\eps,O(\log(1/\eps)/\eps))$-flat decomposition of
any target log-concave distribution.  Combining this with Algorithm
\textsc{Learn-Known-Decomposition} we obtain an $\tilde{O}(1/\eps^3)$-sample
algorithm for learning a single discrete log-concave distribution, and
combining it with Corollary~\ref{cor:main} we obtain a $k \cdot
\tilde{O}(1/\eps^4)$-sample
algorithm for learning a $k$-mixture of log-concave distributions.
\subsection{Constructing a flat decomposition given samples from a
log-concave distribution.}
\label{sec:logconcave-decomposition}
We recall the well-known fact that
log-concavity 
implies unimodality 
(see e.g.~\cite{KeilsonGerber:71}). Thus, it is useful to analyze
log-concave distributions which additionally are monotone
(since a general log-concave distribution can be viewed as
consisting of two such pieces).
With this motivation we give the following lemma:
\begin{lemma}
\label{lemma:lcc-uniform}
Let $p$ be a distribution over $[n]$ that is non-decreasing and log-concave on
$[1,b] \subseteq [n]$.
Let $I = [a,b]$ be an interval of mass $p(I) = \tau$, and suppose that
the interval $J = [1,a-1]$ has mass $p(J) = \sigma > 0$.
Then
$$p(b)/p(a) \leq 1 + \tau/\sigma.$$
\end{lemma}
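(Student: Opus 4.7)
The plan is to exploit log-concavity via an exponential comparison and reduce everything to a geometric series. Let $\alpha = p(b)/p(a)$; the case $\alpha = 1$ is trivial (and $a = b$ is also trivial since then $\alpha = 1$), so assume $a < b$ and $\alpha > 1$. Introduce the parameter $\beta = \alpha^{1/(b-a)} > 1$. The key fact from log-concavity is that the sequence of differences $d_k = \log p(k+1) - \log p(k)$ is non-increasing on $[1,b-1]$, and by the non-decreasing assumption these differences are non-negative. In particular, the average slope of $\log p$ over any sub-interval of $[1,a]$ is at least the average slope over $[a,b]$, which equals $\log \beta$.

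Step 1 (lower bound on $\tau$): Apply discrete concavity of $\log p$ on $[a,b]$. Each $\log p(k)$ lies above the chord from $\log p(a)$ to $\log p(b)$, giving $p(k) \ge p(a)\,\beta^{k-a}$ for $k\in [a,b]$. Summing the resulting geometric series,
\[
\tau \;\ge\; p(a)\sum_{j=0}^{b-a} \beta^{j} \;=\; p(a)\,\frac{\alpha\beta-1}{\beta-1} \;\ge\; p(a)\,\frac{\alpha-1}{\beta-1},
\]
where the last inequality uses $\beta \ge 1$.

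Step 2 (upper bound on $\sigma$): For $k < a$, the slope comparison shows $\log p(a)-\log p(k) \ge (a-k)\log\beta$, i.e.\ $p(k)\le p(a)\,\beta^{-(a-k)}$. Summing,
\[
\sigma \;\le\; p(a)\sum_{j=1}^{a-1}\beta^{-j} \;\le\; p(a)\sum_{j=1}^{\infty}\beta^{-j} \;=\; \frac{p(a)}{\beta-1}.
\]

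Step 3 (combine): The factors of $p(a)/(\beta-1)$ cancel cleanly. From Steps 1 and 2,
\[
(\alpha-1)\,\sigma \;\le\; (\alpha-1)\,\frac{p(a)}{\beta-1} \;\le\; \tau,
\]
so $\alpha \le 1 + \tau/\sigma$, as desired. The only real "obstacle" is choosing the right parameterization $\beta = \alpha^{1/(b-a)}$ so that the geometric-series bounds on $\tau$ and $\sigma$ share a common $(\beta-1)^{-1}$ factor; once that is in place, the crude estimates $\alpha\beta - 1 \ge \alpha - 1$ and $\sum_{j\ge 1}\beta^{-j} = 1/(\beta-1)$ suffice, and no finer control over $\beta$ is needed.
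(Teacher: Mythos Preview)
Your proof is correct. Both your argument and the paper's rest on the same idea---log-concavity forces geometric decay of the mass to the left of $I$, and summing the resulting geometric series gives $\sigma \le \tau/(\alpha-1)$---but the bookkeeping differs slightly. The paper works at the \emph{block} level: it partitions $J$ into consecutive translates $J_j = I - j|I|$ of length $|I|$ and shows directly (from the non-increasing ratios $p(i-1)/p(i)$) that $p(J_{j+1}) \le \alpha^{-1} p(J_j)$, so $\sigma \le \tau\sum_{j\ge 1}\alpha^{-j}$ without any separate lower bound on $\tau$. You instead work \emph{pointwise}, introducing the per-step ratio $\beta=\alpha^{1/(b-a)}$ and using the chord inequality on $[a,b]$ to lower-bound $\tau$ and the slope comparison on $[1,a]$ to upper-bound $\sigma$; the factors of $p(a)/(\beta-1)$ then cancel. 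The block version is a bit more streamlined (one geometric series instead of two, no auxiliary parameter), but your pointwise version is equally valid and arguably makes the role of concavity of $\log p$ more explicit.
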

\begin{proof}
Let $s \eqdef |I| = b-a+1$ be the length of $I$.
We decompose $J$ into intervals $J_1, \dots, J_t$ of length $s$, starting from
the right.
More precisely, $$J_j \eqdef I - js = [a-js,b-js]$$ for $1 \leq j \leq t \eqdef \lceil(a-1)/s\rceil.$
The leftmost interval $J_t$ may contain non-positive integers;
for this reason define $p(i) \eqdef 0$ for non-positive $i$
(note that the new distribution is still log-concave).
Also define $J_0 \eqdef I = [a,b]$.
Let $\lambda \eqdef p(b)/p(a)$.
We claim that
\begin{equation}
  \label{eqn:exppointmass}
  p(i-s) \leq (1/\lambda) \cdot p(i)
\end{equation}
for $1 \leq i\leq b$.
\eqref{eqn:exppointmass} holds for $i = b$, since $p(b-s) \leq p(a)$ by the
non-decreasing property.
The general case $i\leq b$ follows by induction and using the
fact that the ratio $p(i-1)/p(i)$ is non-decreasing in $i$ for any
log-concave distribution (an immediate consequence of the definition of
log-concavity).

It is easy to see that \eqref{eqn:exppointmass} implies
$$p(J_{j+1}) \leq (1/\lambda) \cdot p(J_j)$$
for $0 \leq j \leq t$.
Since the intervals have geometrically decreasing mass, this implies that
$$\sigma = \littlesum_{1\leq j\leq t} p(J_j) \leq p(I) \littlesum_{j\geq 1} \lambda^{-j} =
  \frac \tau{\lambda-1} .$$
Rearranging yields the desired inequality.
\end{proof}

We will also use the following elementary fact:
\begin{fact}
\label{obs:multi-uniform}
Let $p$ be a distribution over $[n]$ and $I \subseteq [n]$ be an interval such that
$ \max_{i, j \in I} p(i)/p(j) \leq 1+\eta$ 
(i.e., $p$ is $\eta$-multiplicatively close to uniform over the interval $I$).
Then the flattened sub-distribution $p^{\flattened(I)}(i) \eqdef p(I)/|I|$ satisfies
$\dtv (p^I , p^{\flattened(I)}) \leq \eta \cdot p(I) .$
\end{fact}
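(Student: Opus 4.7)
The plan is a direct short calculation based on the multiplicative closeness assumption. I would let $a = \min_{j \in I} p(j)$ and $b = \max_{j \in I} p(j)$, so that by hypothesis $b \leq (1+\eta)a$, and observe that the flattened value $\bar p \eqdef p^{\flattened(I)}(i) = p(I)/|I|$ is a convex combination of values of $p$ on $I$ and therefore satisfies $a \leq \bar p \leq b$.

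From this I get the pointwise estimate: for every $i \in I$,
\[
  |p(i) - \bar p| \;\leq\; b - a \;\leq\; \eta\, a \;\leq\; \eta\, \bar p \;=\; \eta\, \frac{p(I)}{|I|}.
\]
Summing over the $|I|$ points of $I$ yields
\[
  \sum_{i \in I} |p(i) - \bar p| \;\leq\; \eta\, p(I),
\]
and since $p^I$ and $p^{\flattened(I)}$ both vanish outside $I$, this sum equals $\|p^I - p^{\flattened(I)}\|_1$. The definition of total variation distance gives $\dtv(p^I, p^{\flattened(I)}) = \tfrac12 \|p^I - p^{\flattened(I)}\|_1 \leq \tfrac12 \eta\, p(I) \leq \eta\, p(I)$, as claimed. (The stated bound is thus even loose by a factor of two.)

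There is no real obstacle here; the only thing to check carefully is that $\bar p$ lies in the interval $[a,b]$, which is immediate since the average of any set of numbers lies between their min and max. The proof is essentially one line once this observation is made, and it is the reason the multiplicative-closeness hypothesis suffices to control $L_1$ error by $\eta \cdot p(I)$ rather than by something scaling with $|I|$.
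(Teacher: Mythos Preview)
Your argument is correct. The paper does not actually give a proof of this statement; it is stated as an ``elementary fact'' and left to the reader, so there is nothing to compare against beyond confirming that your calculation is valid --- which it is. The one implicit assumption worth making explicit is that the hypothesis $\max_{i,j\in I} p(i)/p(j) \leq 1+\eta$ forces $a = \min_{j\in I} p(j) > 0$ (otherwise the ratio is undefined or infinite), so the division by $a$ and the step $\eta a \leq \eta \bar p$ are legitimate; the degenerate case $p(I)=0$ is trivial.
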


\ignore{

START IGNORE

END IGNORE

}

We are now ready to present and analyze our algorithm
\textsc{Decompose-Log-Concave} that draws samples from an
unknown log-concave distribution and outputs a flat decomposition.
The algorithm simply runs the general algorithm
\textsc{Construct-Decomposition} with an appropriate choice of
parameters.  However the analysis will not go via the ``generic''
Theorem~\ref{thm:construct-decomposition} (which would yield a weaker
bound) but instead uses Lemma~\ref{lemma:lcc-uniform}, which is specific
to log-concave distributions.
\begin{framed}
Algorithm \textsc{Decompose-Log-Concave}$(p,\eps,\delta)$:


{\bf Input:}  sample access to unknown log-concave distribution $p$
over $[n]$; \\
accuracy parameter $\eps$; confidence parameter $\delta$

\begin{enumerate}
\item Set $\tau = \Theta(\eps/\log(1/\eps))$. 
\item Run \textsc{Construct-Decomposition}$(p,\tau,\eps,\delta)$ and
return the decomposition ${\cal P}$ that it yields.
\end{enumerate}
\end{framed}
Our main theorem in this section is the following:
\begin{theorem}
\label{thm:lcc-dec}
For any log-concave distribution $p$ over $[n]$,
Algorithm~\textsc{Decompose-LogConcave}$(p,\eps,\delta)$
draws $O(\log(1/\eps)/\eps^3 + \log(1/\delta)(\log(1/\eps))^2/
\eps^2)$ samples
from $p$ and with probability at least $1-\delta$ constructs a decomposition
$\partit P$ that is $(p, \eps, O(\log(1/\eps)/\eps))$-flat.
\end{theorem}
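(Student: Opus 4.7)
The plan is to combine the generic output guarantees of \textsc{Construct-Decomposition} with a log-concavity-specific bound on the per-interval flatness contribution, using Lemma~\ref{lemma:lcc-uniform} and Fact~\ref{obs:multi-uniform}.

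First, by Observation~\ref{obs:intervalmass}, with $\tau = \Theta(\eps/\log(1/\eps))$ the partition $\cal P$ returned by \textsc{Construct-Decomposition} has at most $2/\tau = O(\log(1/\eps)/\eps)$ intervals, giving the claimed size bound; the sample count follows from plugging this $\tau$ into the sample count of \textsc{Construct-Decomposition}. Next I would apply the VC inequality for intervals (Theorem~\ref{thm:vc-inequality}, using VC dimension $2$) together with the concentration bound Theorem~\ref{thm:bdd-diff} to conclude that with probability $\geq 1-\delta$ we have $|p(I) - \widehat{p}_m(I)| \leq \tau/4$ uniformly over all intervals $I \subseteq [n]$. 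Conditioning on this event, every non-singleton interval $I \in \cal P$ has true mass $p(I) = O(\tau)$, and two consecutive non-singleton intervals have combined true mass $\geq \tau/2$.

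It remains to bound $\dtv(p, p^{\flattened(\cal P)})$ by $O(\eps)$. Since log-concavity implies unimodality, fix a mode $m^* \in [n]$ and split the (unique) interval of $\cal P$ containing $m^*$ at $m^*$ (adding at most one interval to $\cal P$), yielding a non-decreasing portion $[1, m^*]$ and a non-increasing portion $[m^*+1, n]$. List the intervals on the non-decreasing side left-to-right as $I_1, \ldots, I_L$ with $I_j = [a_j, b_j]$, and set $\sigma_j = p([1, a_j - 1])$. Singleton intervals contribute $0$ to $\dtv(p, p^{\flattened(\cal P)})$. For a non-singleton $I_j$, Lemma~\ref{lemma:lcc-uniform} yields $p(b_j)/p(a_j) \leq 1 + p(I_j)/\sigma_j$, and Fact~\ref{obs:multi-uniform} converts this into a contribution of at most $p(I_j)^2/\sigma_j = O(\tau^2/\sigma_j)$.

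The payoff comes from the summation: the ``two consecutive intervals carry mass $\geq \tau/2$'' property from the previous step gives $\sigma_j = \Omega(j\tau)$ once $j$ exceeds a small absolute constant $j_0$, so each such $I_j$ contributes $O(\tau/j)$. Summing over $j_0 < j \leq L = O(1/\tau)$ produces $O(\tau \log(1/\tau)) = O(\eps)$ for the chosen $\tau$. The non-increasing side is handled symmetrically, applying the left–right reflected version of Lemma~\ref{lemma:lcc-uniform}. The $O(1)$ boundary intervals near the mode and the first $j_0$ intervals on each monotone side, where $\sigma_j$ is too small for Lemma~\ref{lemma:lcc-uniform} to be useful, together carry mass $O(\tau) = O(\eps/\log(1/\eps))$ and are bounded trivially by their own mass. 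The main obstacle is precisely this bookkeeping at the boundaries (singleton intervals, intervals with tiny $\sigma_j$, and the interval straddling $m^*$); each category must be shown to contain only $O(1)$ intervals of total mass $o(\eps)$, which is where the uniform bound $|p(I) - \widehat{p}_m(I)| \leq \tau/4$ from Step 1 is crucial.
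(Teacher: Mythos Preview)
Your proposal is correct and follows essentially the same route as the paper: bound $|\mathcal{P}|$ via Observation~\ref{obs:intervalmass}, use the VC/Kolmogorov bound to relate empirical and true interval masses, split at the mode by unimodality, apply Lemma~\ref{lemma:lcc-uniform} plus Fact~\ref{obs:multi-uniform} to get a per-interval contribution of $O(\tau^2/\sigma_j)$, and sum using $\sigma_j=\Omega(j\tau)$ to obtain $O(\tau\log(1/\tau))$. The only cosmetic difference is that the paper does not split the interval $I_0$ containing the mode but instead bounds its contribution directly by $p(I_0)\le 2\tau$; your splitting step is harmless but unnecessary since that interval's contribution is already $O(\tau)$ (and note that flatness of a refinement does not by itself imply flatness of the original $\mathcal{P}$, so you should bound $I_0$'s contribution to the \emph{unrefined} flattening directly, as you effectively do in your boundary bookkeeping).
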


\begin{proof}
We first note that the number of intervals in $\partit P$ is at most
$2/\tau$ by Observation \ref{obs:intervalmass}; this will be useful
below.\ignore{
Indeed, if \textsc{Right-Interval} is called more than $2/\tau$ times, then the
first $2/\tau$ intervals returned from the subroutine will have total mass
exceeding $1$ (under $\widehat{p}_m$), a contradiction.}
We may also assume that $\dk(p,\widehat{p}_m) \leq \tau$,
where $\widehat{p}_m$ is the empirical distribution obtained in Step~1
of {\textsc{Construct-Decomposition}};
this inequality holds with probability at least $1-\delta$,
as follows by a combined application of Theorems~\ref{thm:vc-inequality} and~\ref{thm:bdd-diff}.
Since $p$ is log-concave, it is unimodal.
Let $i_0$ be a mode of $p$.

Let $\partit P_L = \{ I\in \partit P\mid I\subset [1,i_0-1] \}$ be the
collection of intervals to the left of $i_0$.
We now bound the contribution of intervals in $\partit P_L$ to
$\Delta \eqdef \dtv \left( p , p^{\flattened({\partit P})} \right)$.
Let $I_1, \dots, I_{t_L}$ be the intervals in $\partit P_L$ listed from left to
right.
Let $J_j = \cup_{j' < j} I_{j'}$ be the union of intervals to the left of
$I_j$.
If $I_j$ is a singleton, its contribution to $\Delta$ is zero.
Otherwise,
$$p(I_j) \leq \widehat{p}_{m}(I_j) + \tau \leq 2\tau$$
by the $\tau$-closeness of $p$ and $\widehat{p}_m$
in Kolmogorov distance and the definition of
\textsc{Right-Interval}.
Also, by Observation \ref{obs:intervalmass}, $\widehat{p}_m(J_{j}) \geq \lfloor
(j-1)/2\rfloor \tau \geq ((j-1)/2 - 1)\tau$, and hence
$$p(J_j) \geq \widehat{p}_{m}(J_j) - \tau \geq \tau(j-5)/2,$$
again by closeness in Kolmogorov distance.

Since $p$ is non-decreasing on $[1,i_0-1]$, we have
$$\left\| p^{I_j} - p^{\flattened(I_j)} \right\|_1 \leq \frac 8 {j-5} \tau $$
for $j > 5$, by Lemma \ref{lemma:lcc-uniform} and Fact
\ref{obs:multi-uniform}, using the upper and lower bounds
on $p(I_j)$ and $p(J_j)$ respectively.
Consequently, $\| p^{I_j} -p^{\flattened(I_j)}\|_1 \leq O(\tau/j)$ for all $j\in [t_L]$.
Summing this inequality, we get
$$\littlesum_{j\leq t_L} \| p^I - p^{\flattened(I_j)}\|_1 \leq \littlesum_{j\leq t_L}
  O(\tau/j) = O(\tau\log(1/\tau)) .$$
The right-hand side above is at most $\eps/2$ by our choice of $\tau$ (with an
appropriate constant in the big-oh).

Similarly, let $\partit P_R = \{ I\in \partit P\mid I\subset [i_0+1, n] \}$ be
the collection of intervals to the right of $i_0$.
An identical analysis
(using the obvious analogue of Lemma~\ref{lemma:lcc-uniform} for
non-increasing log-concave distributions on $[i_0+1,n]$)
shows that the contribution of intervals in $\partit P_R$
to $\Delta$ is at most $\eps/2$.

Finally, let $I_0\in \partit P$ be the interval containing $i_0$.
If $I_0$ is a singleton, it does not contribute to $\Delta$.
Otherwise, $\widehat{p}_{m}(I_0) \leq \tau$ and $p(I_0) \leq 2\tau$,
hence the contribution of $I_0$ to $\Delta$ is at most $2\tau$.

Combining all three cases, $$\left\| p - p^{\flattened({\partit P})}\right\|_1 \leq \eps/2 + \eps/2
+ 2\tau \leq 2\eps.$$
Hence $d_\text{TV} \left( p, p^{\flattened({\partit P})}  \right) \leq \eps$
as was to be shown.
\end{proof}

Our claimed upper bounds follow from the above theorem by using our framework of Section~\ref{sec:main}.
Indeed, it is clear that we can learn any unknown log-concave distribution
by running Algorithm~\textsc{Decompose-Log-Concave}$(p,\eps,\delta/2)$
to obtain a decomposition ${\cal P}$ and then Algorithm~\textsc{Learn-Known-Decomposition}$(p,P,\eps,\delta/2)$
to obtain a hypothesis distribution $h$:
\begin{corollary}
\label{cor:learn-logconcave}
Given sample access to a log-concave distribution $p$ over $[n]$,
there is an algorithm \textsc{Learn-Log-Concave}$(p,\eps,\delta)$ that uses
$O(\log(1/\delta)\log(1/\eps)/\eps^3)$ samples from $p$ and
with probability at least $1 - \delta$ outputs a distribution $h$
such that $\dtv(p,h) \leq \eps.$
Its running time is
$\tilde{O}(\log(n) \cdot (1/\eps^3 + \log(1/\delta)/\eps^2))$
bit operations.
\end{corollary}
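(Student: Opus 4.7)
The plan is to obtain \textsc{Learn-Log-Concave} by composing the two previously established procedures in the obvious way: first build a flat decomposition of $[n]$ tailored to the unknown log-concave $p$, and then learn $p$ with respect to that decomposition. Concretely, on input $(p,\eps,\delta)$ the algorithm would run \textsc{Decompose-Log-Concave}$(p,\eps/3,\delta/2)$ to obtain a partition $\partit P$ of $[n]$, and would then run \textsc{Learn-Known-Decomposition}$(p,\partit P,\eps/3,\delta/2)$, returning the hypothesis $h$ that the second call outputs.

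For correctness I would invoke Theorem~\ref{thm:lcc-dec}: with probability at least $1-\delta/2$, the partition $\partit P$ is a $(p,\eps/3,t)$-flat decomposition of $[n]$ with $t = O(\log(1/\eps)/\eps)$ intervals. Conditioned on this good event, Theorem~\ref{thm:learn-known-decomposition} applied to $\partit P$ yields, with probability at least $1-\delta/2$, a hypothesis $h$ satisfying $\dtv(h,p) \le 2(\eps/3) < \eps$. A union bound over the two failure events then gives overall success probability at least $1-\delta$, as required.

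For the sample complexity, Theorem~\ref{thm:lcc-dec} contributes $O(\log(1/\eps)/\eps^3 + \log(1/\delta)(\log(1/\eps))^2/\eps^2)$ samples, while Theorem~\ref{thm:learn-known-decomposition}, applied with $t = O(\log(1/\eps)/\eps)$, contributes $O((t+\log(1/\delta))/\eps^2) = O(\log(1/\eps)/\eps^3 + \log(1/\delta)/\eps^2)$ additional samples. Summing and simplifying via $(\log(1/\eps))^2/\eps^2 = O(\log(1/\eps)/\eps^3)$ (valid since $\log(1/\eps) \le 1/\eps$ for small $\eps$) collapses both expressions into the claimed $O(\log(1/\delta)\log(1/\eps)/\eps^3)$ total. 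The runtime bound is then read off from the two subroutine runtimes together with the $\Theta(\log n)$ per-sample arithmetic cost.

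The hard work is already contained in Theorem~\ref{thm:lcc-dec}, which established that log-concavity is enough to force the generic \textsc{Construct-Decomposition} routine to return only $O(\log(1/\eps)/\eps)$ intervals, rather than the $O(t/\eps)$ intervals delivered by the generic Theorem~\ref{thm:construct-decomposition}. Consequently the present corollary is essentially a bookkeeping statement: the only mild subtlety is checking that the $\log^2(1/\eps)/\eps^2$ contribution from Stage~1 is dominated by the $1/\eps^3$ terms, so that the final expression simplifies to the clean form stated.
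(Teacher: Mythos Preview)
The proposal is correct and follows exactly the approach sketched in the paper: run \textsc{Decompose-Log-Concave} to obtain a $(p,\Theta(\eps),O(\log(1/\eps)/\eps))$-flat partition $\partit P$, then feed $\partit P$ into \textsc{Learn-Known-Decomposition}, and add up the sample and time costs from Theorems~\ref{thm:lcc-dec} and~\ref{thm:learn-known-decomposition}. Your choice of $\eps/3$ (versus the paper's $\eps$) is just a cosmetic rescaling to land exactly on $\dtv(p,h)\le\eps$ rather than $2\eps$, and your verification that the $\log(1/\delta)(\log(1/\eps))^2/\eps^2$ term is absorbed into $\log(1/\delta)\log(1/\eps)/\eps^3$ is the only bookkeeping the paper leaves implicit.
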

Theorem~\ref{thm:lcc-dec} of course implies that
every log-concave distribution $p$ is $(\eps,O(\log(1/\eps)/\eps))$-flat.
We may thus apply Corollary~\ref{cor:main} and obtain our main
learning result for $k$-mixtures of log-concave distributions:
\begin{corollary} [see Theorem~\ref{thm:logconcave-informal}]
\label{cor:learn-logconcave-mixture}
Let $p$ be any $k$-mixture of log-concave distributions over $[n]$.
There is an algorithm \textsc{Learn-Log-Concave-Mixture}$(p,k,\eps,\delta)$
that draws $O(k\log(1/\eps)/\eps^4 + \log(1/\delta)/\eps^2)$ samples from
$p$ and with probability at least $1-\delta$ outputs a
distribution $h$ such that $\dtv(p,h) \leq \eps.$
Its running time is
$\tilde{O}(\log(n) \cdot (k\log(1/\eps)/\eps^4 + \log(1/\delta)/\eps^2))$
bit operations.
\end{corollary}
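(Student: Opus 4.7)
The plan is to simply chain together the two results that have already been proved: the structural theorem for log-concave distributions (Theorem~\ref{thm:lcc-dec}) and the generic mixture learner (Corollary~\ref{cor:main}). In other words, the log-concave mixture corollary is a direct instantiation of the general framework, so the ``proof'' is really just a check that the parameters line up and yield the claimed bounds.

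More concretely, I would proceed as follows. First, observe that Theorem~\ref{thm:lcc-dec}, applied with accuracy parameter $\eps/8$, produces (with high probability) an $(\eps/8, O(\log(1/\eps)/\eps))$-flat decomposition of any log-concave distribution. In particular, every log-concave distribution over $[n]$ belongs to a class $\mathfrak{C}$ of $(\eps/8,t)$-flat distributions with $t = O(\log(1/\eps)/\eps)$. This is the essential structural input required by Corollary~\ref{cor:main}.

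Second, invoke Corollary~\ref{cor:main} on this class $\mathfrak{C}$: running \textsc{Learn-Unknown-Decomposition}$(p, kt, \eps, \delta)$ with $t = O(\log(1/\eps)/\eps)$ on a $k$-mixture of log-concave distributions draws $O(kt/\eps^3 + \log(1/\delta)/\eps^2) = O(k\log(1/\eps)/\eps^4 + \log(1/\delta)/\eps^2)$ samples and, with probability at least $1-\delta$, outputs a hypothesis $h$ with $\dtv(p,h) \le \eps$. The running time bound $\tilde{O}(\log(n) \cdot (k\log(1/\eps)/\eps^4 + \log(1/\delta)/\eps^2))$ follows by substituting the same $t$ into the running-time bound of Corollary~\ref{cor:main}.

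There is no real obstacle: the hard work has been front-loaded into the log-concave-specific decomposition bound (Theorem~\ref{thm:lcc-dec}, whose proof relied on Lemma~\ref{lemma:lcc-uniform}) and into Lemma~\ref{lem:mixture}, which shows that a $k$-mixture of $(\eps/8,t)$-flat distributions is itself $(\eps/4, kt)$-flat and can therefore be handled by the unknown-decomposition learner. The only thing to verify is that the factors of $8$ and $2$ lost when (i) refining to the mixture decomposition and (ii) running the empirical flattening algorithm compose into an overall variation-distance bound of $\eps$, which is precisely the accounting already carried out inside Corollary~\ref{cor:main}. Hence the corollary follows immediately.
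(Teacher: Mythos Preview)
Your proposal is correct and matches the paper's own argument essentially verbatim: the paper simply observes that Theorem~\ref{thm:lcc-dec} implies every log-concave distribution is $(\eps,O(\log(1/\eps)/\eps))$-flat and then applies Corollary~\ref{cor:main}, which is exactly what you do (with a bit more explicit parameter bookkeeping).
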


\smallskip

\noindent
{\bf Lower bounds.}
It is shown in~\cite[Lemma 15.1]{DL:01} that learning
a continuous distribution whose density is bounded and convex over $[0,1]$ to accuracy $\eps$
requires $\Omega((1/\eps)^{5/2})$ samples. An easy adaptation of this argument
implies the same result for a bounded concave density over $[0,1]$. By an appropriate discretization
procedure, one can show that learning a discrete concave density over $[n]$ requires $\Omega((1/\eps)^{5/2})$ samples
for all $\eps \geq 1/n^{\Omega(1)}.$ Since a discrete concave distribution is also log-concave, the same lower bound
holds for this case too. For the case of $k$-mixtures,
we may consider a uniform mixture of $k$
component distributions where the $i$-th distribution in the mixture is supported on $[1+(i-1)n/k,in/k]$ and is log-concave on its support.
It is clear that each component distribution is log-concave over $[n]$, and it is not difficult to see that in
order to learn such a mixture to accuracy $\eps$, at least
$9/10$ of the component distributions must be learned to total
variation distance at most $10 \eps$.  We thus get that
for
$k \leq n^{1-\Omega(1)}$
and
$\eps \geq 1/n^{\Omega(1)}$,
any algorithm for learning a mixture of $k$ log-concave distributions to
accuracy $\eps$
must use $\Omega(k \eps^{-5/2})$ samples.

\ignore{

\subsection{Lower bounds} \label{sec:logconcave-lowerbounds}

\ignore{




}

\rnote{Need to fill this in.  Maybe it's better to just do it in prose and
not in a formal theorem statement since we want to emphasize the algorithms
as our main contributions.}

\bigskip \bigskip

\snote{Define minimax risk.}

\rnote{Integrate these results into the language of the paper, i.e. state
a corollary of the minimax risk lower bound in language something like
``Any algorithm $A$ that has probability at least $2/3$ of outputting
a hypothesis distribution $h$ that is $\eps$-close to a target log-concave
distribution $p$ over $[n]$ must draw $\Omega(1/\eps^{5/2})$ samples
from $p$.''

Should also give a similar statement of a lower bound for
learning $k$-mixtures of log-concave distributions.

Finally, probably need to add/define some preliminaries to make this
more readable, e.g. the Hellinger distance.
}

We first show the minimax lowerbound for the class $\mathcal F$ of (continuous)
concave distributions on the unit interval $[0,1]$.
Our proof is a slight modification of the same lowerbound for convex
distributions \cite[Lemma 15.1]{DL:01}.
While Proposition \ref{prop:cont-concave-lb} is well-known (e.g. it is Exercise
15.21 of \cite{DL:01}), we include a proof for completeness.

\begin{proposition}
\label{prop:cont-concave-lb}
$\mathcal R_m(\mathcal F) \geq \Omega(m^{-2/5})$.
\end{proposition}

\begin{proof}
Partition the unit interval into $r$ subintervals $A_1, \dots, A_r$ of equal
length, where $r$ will eventually chosen to be $\Theta(m^{1/5})$.
We will construct a hypercube family of $2^r$ functions in $\mathcal F$ and apply Assouad's lemma.
On $A_i = [(i-1)/r,i/r]$, construct two piecewise linear functions $f_i$ and
$g_i$ with the following properties:
\begin{enumerate}
  \item $f_i$ and $g_i$ agree at the ends of the interval, and $f_i(i/r) =
    f_{i+1}(i/r)$.
  \item $\int_{A_i} f_i = \int_{A_i} g_i$.
  \item $f_i',g_i' \leq 0$, and $\inf_{x\in A_i} \min\{f'_i(x), g'_i(x)\}
    \geq \sup_{x\in A_{i+1}} \max\{f'_{i+1}(x), g'_{i+1}(x)\}$.
  \item $f_1(0) = g_1(0) = \Theta(1)$.
  \item $f_r(1) = g_r(1) = 1/3$.
  \item $\sum_{i=1}^r \int_{A_i} f_i = 1$.
\end{enumerate}

We can piece together a function $f$ by choosing either $f_i$ or $g_i$ on
$A_i$.
By the above properties, $f$ is a probability density.
Such a density will be continuous decreasing on $[0,1]$, and have decreasing
derivatives (so they are concave).

We let $a = \Theta(1/r)$ be a parameter, and set $d_i = 4ai$.
We will choose $f_i$ and $g_i$ so that their derivatives are between $-d_{i-1}$
and $-d_i$.
We further subdivide $A_i$ into three subintervals $A_{i,j} =
[(i-1)/r+(j-1)/3r,(i-1)/r+j/3r)$ of equal length.
On $A_i$, we set
\[ f_i' = \begin{cases} -(d_{i-1}+a) & x\in A_{i,1}\cup A_{i,2} \\ -d_i &
    x\in A_{i,3} \end{cases}, \]
and
\[ g_i' = \begin{cases} -d_{i-1} & x\in A_{i,1} \\ -(d_{i-1}+3a) & x\in
    A_{i,2}\cup A_{i,3} \end{cases} . \]
One verifies that $\int_{A_i} f_i' = \int_{A_i} g_i'$, so that $f_i$ and $g_i$
make equal jumps.
One also verifies that $\int_{A_i} f_i = \int_{A_i} g_i$.

We can make the total mass to be one, with an appropriate choice of $a$.
Indeed,
\[ \int_{A_i} f_i' = -\Theta\left( \frac{ai}r \right) . \]
We now compute the value of $f$ in our class, starting from the right
(since we have the constraint $f_r(1) = g_r(1) = 1/3$).
The value of $f$ at $i/r$ equals
\[ \frac13 + \sum_{j=i}^r \Theta \left( \frac{aj}r \right) \]
This implies
\[ f_i(i/r-x) = \frac 13 + \sum_{j=i}^r \Theta\left( \frac{aj}r \right) +
  \Theta(ai)x, \]
and
\[ \int_{A_i} f_i = \frac 1{3r} + \sum_{j=1}^r \Theta\left( \frac{aj}{r^2}
  \right) + \Theta \left( \frac{ai}{r^2} \right). \]
The total mass of any $f$ in our class is
\[ \sum_{i=1}^r \int_{A_i} f_i =  \frac 13 + \Theta\left( ar \right) , \]
which can be made one by an appropriate choice of $a = \Theta(1/r)$.
We also have $f_1(0) = 1/3 + \Theta(ar) = \Theta(1)$, as claimed.

We now apply Assouad's lemma (e.g. \cite[15.2]{DL:01}).
Let $\alpha$ be the minimum $L_1$ distance between $f_i$ and $g_i$.
We have
\[ \int_{A_i} |f_i - g_i| = 2\int_0^{1/3r} ax \;dx + 2\int_0^{1/6r} 2ax\; dx =
  \frac a{6r^2} , \]
so $\alpha = \Omega(1/r^3)$.
Let $\beta$ be the minimum Hellinger closeness between two hypercube functions
$f$ and $g$ that differ on $A_i$.
Since
\[ \int \sqrt{fg} = 1 - (1/2)\int (\sqrt f - \sqrt g)^2 , \]
we bound
\[ \int (\sqrt f - \sqrt g)^2 = \int \frac{(f - g)^2}{(\sqrt f + \sqrt g)^2}
  \leq O(1) \int (f-g)^2 , \]
where we have used $f,g \geq 1/3$.
Now
\[ \int_{A_i} (f_i-g_i)^2 = 2\int_0^{1/3r} (ax)^2\; dx + 2\int_0^{1/6r}
  (2ax)^2\; dx = O(a^2/r^3) , \]
so $1-\beta = O(a^2/r^3) = O(1/r^5)$, which is at most $1/4m$ by choosing $r =
\Theta(m^{1/5})$.

Finally, by Assouad's lemma,
\[ \mathcal R_m(\mathcal F) \geq \frac{r\alpha}2 (1-\sqrt{2m(1-\beta)}) \geq
  \Omega(r\alpha) \geq \Omega(m^{-2/5}) . \qedhere \]
\end{proof}

Next, we observe that the above lowerbound for continuous concave distributions
imply the same lowerbound for \emph{discrete} concave distributions, by simply
discretizing a continuous distribution $p$.
To this end, partition the unit interval into $t$ equal-length subintervals.
The discrete version $\tilde p$ of $p$ will simply be the total mass of $p$ in
each subinterval, namely $\tilde p(i) = \int_{(i-1)/t}^{i/t} p(x) dx \approx
p(i/t)/t$.

It is easy to see that if $p$ is concave, then so is $\tilde p$.
Therefore we have a hypercube family of discrete concave densities $\{\tilde
  f_\theta\}$.
Note the $L_1$ and Hellinger distances in the discrete setting closely
approximates their continuous counterparts, thanks to the theory of Riemann
integration.
Indeed, as $t\to \infty$,
\[ \sum_i |\tilde f(i) - \tilde g(i)| \to \int |f-g| \text{ and } \sum_i
  (\sqrt{\tilde f}(i) - \sqrt{\tilde g}(i))^2 \to \int (\sqrt f - \sqrt g)^2 .
\]
Hence we get the same bounds for $L_1$ and Hellinger distance between $\tilde
f_\theta$.
Consequently, the minimax risk lowerbound is the same in the discrete case.

Finally, we also get a $\Omega(m^{-2/5})$ minimax lowerbound for discrete
\emph{log-concave} distributions, because a discrete concave distribution is
also log-concave.

}

\section{Learning mixtures of MHR distributions} \label{sec:mhr}

In this section we apply our general method from Section~\ref{sec:main}
to learn \emph{monotone hazard rate (MHR)} distributions  over $[n]$ and mixtures of
such distributions.
\begin{definition}
Let $p$ be a distribution supported in $[n]$. The \emph{hazard rate}
of $p$ is the function
$H(i) \eqdef {\frac {p(i)}{\littlesum_{j \geq i}
p(j)}}$; if $\littlesum_{j \geq i} p(j) = 0$ then we say $H(i) = +\infty.$
We say that $p$ has \emph{monotone hazard rate} (MHR) if
$H(i)$ is a non-decreasing function over $[n].$
\end{definition}
It is known that every log-concave distribution over $[n]$ is MHR but
the converse is not true, as can easily be seen from the fact that every
monotone non-decreasing distribution over $[n]$ is MHR.

In Section~\ref{sec:learnsinglemhr} we prove that every MHR distribution
over $[n]$ has an $(\eps,O(\log(n/\eps)/\eps))$-flat decomposition.
We combine this with our general results from
Section~\ref{sec:main} to get learning results for mixtures of MHR
distributions.
\subsection{Learning a single MHR distribution.}
\label{sec:learnsinglemhr}

\ignore{


}



Our algorithm to construct a flat decomposition of an MHR distribution $p$ is
\textsc{Decompose-MHR}, given below.  Note that this algorithm
takes an explicit description of $p$ as input and does not draw any
samples from $p$.
Roughly speaking, the algorithm works by partitioning
$[n]$ into intervals such that within each interval
the value of $p$ never deviates from its value at the leftmost point
of the interval by a multiplicative factor of more than
$(1 + \eps/8).$
\begin{framed}
Algorithm \textsc{Decompose-MHR}$(p,\eps)$:

{\bf Input:}  explicit description of MHR distribution $p$ over $[n]$;
accuracy parameter $\eps > 0$
\begin{enumerate}
\item Set $J = [n]$ and initialize $\partit Q$ to be the empty set.
\item   Let $I$ be the interval returned by
\textsc{Right-Interval}$({p}, J, \eps/8)$, and
$I'$ be the interval returned by
\textsc{Right-Interval}$({p}, J\setminus I,  \eps/8)$.
Set $J = J \setminus (I \cup I')$.
\item Set $i \in J$ to be the smallest integer such that $p(i) \geq  \eps/(4n)$.  If no such $i$ exists, let $I'' = J$ and
go to Step 5. Otherwise, let $I'' = [1, i-1]$ and $J = J \setminus I''$.
\item While $J \neq \emptyset$:
\begin{enumerate}
\item Let $j\in J$ be the smallest integer such that either $p(j) > (1+ \eps/8)p(i)$ or $p(j) < \frac{1}{1+ \eps/8}p(i)$ holds.
If no such $j$ exists let $I''' = J$, otherwise, let $I''' = [i, j-1]$.
\item Add $I'''$ to $\partit Q$, and set $J = J\setminus I'''$.
\item Let $i = j$.
\end{enumerate}
\item Return $\partit P = \partit Q \cup \{I, I', I''\}$.
\end{enumerate}
\end{framed}
Our first lemma for the analysis of \textsc{Decompose-MHR} states that
MHR distributions satisfy a condition that is similar to
being monotone non-decreasing:
\begin{lemma}
\label{lemma:mhr-monotone}
Let $p$ be an MHR distribution over $[n]$.
Let $I = [a,b] \subset [n]$ be an interval, and $R = [b+1, n]$ be the
elements to the right of $I$.
Let $\eta \eqdef p(I)/p(R)$.
Then $p(b+1) \geq \frac{1}{1+\eta}p(a) $.
\end{lemma}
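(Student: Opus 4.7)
The plan is to apply the MHR condition directly at the two endpoints $a$ and $b+1$, which is essentially the only tool available.

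First I would recall that the hazard rate $H(i) = p(i)/\sum_{j \geq i} p(j)$ is non-decreasing, so in particular $H(a) \leq H(b+1)$. The tails appearing in these two hazard rates decompose nicely in terms of the quantities given in the lemma: the tail starting at $a$ is $\sum_{j \geq a} p(j) = p(I) + p(R)$ (since $I = [a,b]$ and $R = [b+1,n]$ together cover $[a,n]$), while the tail starting at $b+1$ is simply $p(R)$.

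Substituting these into $H(a) \leq H(b+1)$ gives a comparison between $p(a)/(p(I)+p(R))$ and $p(b+1)/p(R)$. Cross-multiplying (noting $p(R) > 0$ is needed for $H(b+1)$ to be finite, but if $p(R) = 0$ then the bound to be proven becomes $p(b+1) \geq 0/(1+\infty) \cdot p(a)$ which is trivially true after interpreting $\eta$ appropriately), and then dividing by $p(I) + p(R)$, yields $p(b+1) \geq p(a) \cdot p(R)/(p(I)+p(R)) = p(a)/(1+\eta)$ by definition of $\eta = p(I)/p(R)$.

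There is no real obstacle here; the lemma is essentially a direct algebraic restatement of the MHR inequality $H(a) \leq H(b+1)$ using the decomposition of the tail $[a,n] = I \cup R$. The only mild subtlety is the edge case $p(R) = 0$, which I would handle separately (or rule out by noting the lemma is vacuous/trivial in that case).
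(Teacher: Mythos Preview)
Your proposal is correct and is essentially the paper's own argument: both apply the MHR inequality at $a$ and $b{+}1$ and use the decomposition $[a,n]=I\cup R$ to rewrite the tails as $p(I)+p(R)$ and $p(R)$. Your version is in fact slightly cleaner, since you work directly with $H(a)\le H(b{+}1)$ as defined, whereas the paper first disposes of the case $p(b{+}1)>p(a)$ and then passes through the equivalent form $p(a)/p([a{+}1,n])\le p(b{+}1)/p([b{+}2,n])$ before reaching the same conclusion.
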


\begin{proof}
If $p(b+1) > p(a)$ then $p(b+1) \geq {\frac 1 {1+\eta}}p(a)$
holds directly, so for the rest of the proof
we may assume that $p(b+1) \leq p(a)$.

By the definition of the MHR condition we have $\frac{p(a)}{p([a+1,n])} \leq \frac{p(b+1)}{p([b+2,n])}$
and hence
$$\frac{p([a,n])}{p(a)} \geq \frac{p([b+1,n])}{p(b+1)}.$$
Thus we obtain
$$p(b+1)    \geq \frac{p([b+1,n])}{p([a, n])}p(a) =  \frac{1}{1+\eta} p(a)$$
as desired.

\end{proof}

Let $\partit Q = \{I_1, I_2, \dots, I_{|\partit Q|}\}$, with $I_i = [a_i, b_i]$, $1 \leq i \leq |\partit Q|$, where $a_i < a_{i+1}$.
Let $\partit Q' = \{I_i \in \partit Q : p(a_i) > p(a_{i+1})\}$
and $\partit Q'' = \{I_i \in \partit Q : p(a_i) \leq p(a_{i + 1})\}$.
Thus, $\partit Q'$ consists of those intervals $I$ in $\partit Q$ which are
such that the following interval's initial value is significantly
smaller than the initial value of $I$, and $\partit Q''$
consists of those $I \in \partit Q$ for which the following
interval's initial value is significantly larger than the initial
value of $I$.
We also denote $R_i = [a_{i+1}, n]$.
For convenience, we also let $a_{|\partit Q| + 1} = b_{|\partit Q|} + 1$.

We first bound the ``total multiplicative
decrease in $p$'' across all intervals
in $\partit Q'$:
\begin{lemma} \label{lem:prod}
We have
$
\littleprod_{I_i \in \partit Q'} \frac{p(a_i)}{p(a_{i+1})} \leq \frac{8}{\epsilon}.
$
\end{lemma}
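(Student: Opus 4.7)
The plan is to pass from the ratios of point-masses $p(a_i)/p(a_{i+1})$ to ratios of right-tail masses $\alpha_i \eqdef p([a_i,n])$, which admit a clean telescoping, and then to invoke Observation~\ref{obs:intervalmass} to lower-bound the final tail mass.

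First, I would apply Lemma~\ref{lemma:mhr-monotone} to each interval $I_i=[a_i,b_i]$ in $\partit Q$, together with the right-tail $R_i=[b_i+1,n]=[a_{i+1},n]$ (here using that consecutive intervals in $\partit Q$ are adjacent so that $a_{i+1}=b_i+1$). This directly gives $p(a_{i+1}) \geq \frac{1}{1+\eta_i}p(a_i)$ with $\eta_i = p(I_i)/p(R_i)$, which rearranges to the key inequality
$$\frac{p(a_i)}{p(a_{i+1})} \;\leq\; 1+\eta_i \;=\; \frac{p(I_i)+p(R_i)}{p(R_i)} \;=\; \frac{p([a_i,n])}{p([a_{i+1},n])} \;=\; \frac{\alpha_i}{\alpha_{i+1}}.$$
Crucially, the right-hand side is an MHR-free quantity built only from tail masses of $p$.

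Next, since $a_1<a_2<\cdots<a_{|\partit Q|+1}$, the sequence $\alpha_1\geq \alpha_2\geq \cdots\geq \alpha_{|\partit Q|+1}$ is non-increasing, so every factor $\alpha_i/\alpha_{i+1}$ is at least $1$. Consequently, dropping the indices that are \emph{not} in $\partit Q'$ can only increase the product, and the full product telescopes:
$$\prod_{I_i\in\partit Q'}\frac{p(a_i)}{p(a_{i+1})} \;\leq\; \prod_{I_i\in\partit Q'}\frac{\alpha_i}{\alpha_{i+1}} \;\leq\; \prod_{i=1}^{|\partit Q|}\frac{\alpha_i}{\alpha_{i+1}} \;=\; \frac{\alpha_1}{\alpha_{|\partit Q|+1}} \;\leq\; \frac{1}{\alpha_{|\partit Q|+1}}.$$

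Finally, I need to argue $\alpha_{|\partit Q|+1}>\epsilon/8$. By construction (Step~2 of \textsc{Decompose-MHR}), the two intervals $I'$ and $I$ were carved off the right end of $[n]$ before $\partit Q$ was produced from the remaining middle portion, so $[a_{|\partit Q|+1},n] = I'\cup I$ and hence $\alpha_{|\partit Q|+1}=p(I')+p(I)$. Observation~\ref{obs:intervalmass} applied to the two calls of \textsc{Right-Interval} in Step~2 yields $p(I)+p(I')>\epsilon/8$ (the hypothesis $I\neq[n]$ of the observation is trivially satisfied since $p([n])=1>\epsilon/8$). Substituting gives the claimed bound $8/\epsilon$.

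The only real subtlety in the argument is identifying $\alpha_{|\partit Q|+1}$ with $p(I\cup I')$, which requires carefully tracking how \textsc{Decompose-MHR} apportions $[n]$ among $I$, $I'$, $I''$, and $\partit Q$; everything else is a two-line consequence of Lemma~\ref{lemma:mhr-monotone} and the monotonicity of the tail-mass sequence.
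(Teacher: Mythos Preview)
Your proposal is correct and follows essentially the same argument as the paper: apply Lemma~\ref{lemma:mhr-monotone} to bound each ratio by $(p(I_i)+p(R_i))/p(R_i)$, extend the product from $\partit Q'$ to all of $\partit Q$ using that each factor is at least $1$, telescope, and bound the final tail mass $p(R_{|\partit Q|})=p(I\cup I')$ below by $\eps/8$ via Observation~\ref{obs:intervalmass}. Your $\alpha_i$ notation and the explicit verification that $[a_{|\partit Q|+1},n]=I'\cup I$ make the argument slightly more detailed than the paper's, but the substance is identical.
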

\begin{proof}
Observation \ref{obs:intervalmass} implies that
the total probability mass $p(I \cup I')$
on intervals $I$ and $I'$ is  at least $\epsilon/8$.
We thus have
$$
\littleprod_{I_i \in \partit Q'} \frac{p(a_i)}{p(a_{i+1})}
\leq
\littleprod_{I_i \in \partit Q'}\frac{p(I_i) + p(R_i)}{p(R_i)}
\leq
\littleprod_{I_i \in \partit Q}\frac{p(I_i) +p(R_i)}{p(R_i)}
=
\frac{p(I_1)+p(R_1)}{p(R_{|\partit Q|})}
\leq
{\frac 1 {\eps/8}},$$
where the first inequality follows from
Lemma \ref{lemma:mhr-monotone}, the second inequality is self-evident,
the equality follows from the telescoping product, and the final
inequality is because $p(I \cup I') \geq \eps/8.$
\end{proof}

At this point we can bound the number of intervals produced
by \textsc{Decompose-MHR}:
\begin{lemma}\label{lem:mhr_intervals}
Step~4 of Algorithm \textsc{Decompose-MHR} adds at most
$O(\log(n/\epsilon)/\epsilon)$ intervals to $\partit Q$.
\end{lemma}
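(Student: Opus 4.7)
The plan is to exploit the fact that the intervals produced in Step 4 are by construction maximal runs on which $p$ stays within a multiplicative factor of $(1\pm\eps/8)$ of its value at the leftmost point. Listing the intervals left-to-right as $I_1,\dots,I_m$ with left endpoints $a_1<\dots<a_m$, each consecutive pair of left endpoints therefore satisfies one of
\[
\frac{p(a_{i+1})}{p(a_i)} > 1+\eps/8
\qquad\text{or}\qquad
\frac{p(a_{i+1})}{p(a_i)} < \frac{1}{1+\eps/8},
\]
so I would partition these $m-1$ transitions into $\mathcal Q'$ (decreases) and $\mathcal Q''$ (increases) as in the paragraph preceding Lemma~\ref{lem:prod}. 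The goal is then to bound $|\mathcal Q'|$ and $|\mathcal Q''|$ separately by $O(\log(n/\eps)/\eps)$.

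For $\mathcal Q'$, Lemma~\ref{lem:prod} already supplies $\prod_{I_i\in\mathcal Q'} p(a_i)/p(a_{i+1}) \le 8/\eps$. Since each factor is at least $1+\eps/8$, taking logarithms gives $|\mathcal Q'|\cdot\log(1+\eps/8)\le\log(8/\eps)$, and hence $|\mathcal Q'|=O(\log(1/\eps)/\eps)$. For $\mathcal Q''$, I would use a telescoping identity: writing
\[
\prod_{i=1}^{m-1}\frac{p(a_{i+1})}{p(a_i)}=\frac{p(a_m)}{p(a_1)},
\]
splitting the product across $\mathcal Q'\cup\mathcal Q''$, and rearranging gives
\[
\prod_{I_i\in\mathcal Q''}\frac{p(a_{i+1})}{p(a_i)}
\;=\;\frac{p(a_m)}{p(a_1)}\cdot\prod_{I_i\in\mathcal Q'}\frac{p(a_i)}{p(a_{i+1})}.
\]
The second factor is at most $8/\eps$ by Lemma~\ref{lem:prod}, and for the first factor I would bound $p(a_m)\le 1$ trivially, while $p(a_1)\ge \eps/(4n)$ follows directly from Step~3, which ensures that $a_1$ is the smallest position with mass at least $\eps/(4n)$. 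Therefore the $\mathcal Q''$ product is at most $(4n/\eps)(8/\eps)=32n/\eps^2$, and since each factor is at least $1+\eps/8$, we get $|\mathcal Q''|=O(\log(n/\eps)/\eps)$.

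Combining, $m\le |\mathcal Q'|+|\mathcal Q''|+1=O(\log(n/\eps)/\eps)$, which is the claimed bound. The step I expect to be the only mildly delicate one is identifying the right lower bound on $p(a_1)$ (hence the dependence on $n$); the $\eps/(4n)$ cutoff hard-coded in Step~3 is exactly what makes the telescoping argument for $\mathcal Q''$ close, and swapping it for anything smaller would weaken the bound on $|\mathcal Q''|$. Everything else is a routine application of $\log(1+\eps/8)=\Theta(\eps)$ together with the already-proved Lemma~\ref{lem:prod}.
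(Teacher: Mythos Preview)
Your proposal is correct and follows essentially the same strategy as the paper: split the intervals of $\partit Q$ into $\partit Q'$ (decreases) and $\partit Q''$ (increases), bound $|\partit Q''|$ via the telescoping product $\prod_i p(a_{i+1})/p(a_i)=p(a_m)/p(a_1)$ together with $p(a_1)\ge\eps/(4n)$ and Lemma~\ref{lem:prod}, and conclude using $\log(1+\eps/8)=\Theta(\eps)$. The one small difference is in the $\partit Q'$ bound: the paper argues that the tail masses $p(R'_j)$ grow geometrically by a factor $(1+\eps/8)$ and are bounded by $1$, whereas you more directly combine Lemma~\ref{lem:prod} with the fact that each factor $p(a_i)/p(a_{i+1})$ exceeds $1+\eps/8$; your route is arguably cleaner and reaches the same $O(\log(1/\eps)/\eps)$ bound.
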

\begin{proof}
We first bound the number of intervals in $\partit Q'$. Let the
intervals in $\partit Q'$ be $I'_1, I'_2, \dots I'_{|\partit Q'|}$,
where $I'_j = [a'_j,b'_j]$ and $a_1' > a_2' > \dots > a_{|\partit Q'|}'$.
Observation \ref{obs:intervalmass} implies that
the total probability mass $p(I \cup I')$ is  at least $\epsilon/8$.
Hence, $p([b_1'+1, n])$  is at least $\epsilon / 8$ and
we have $p(R'_1) \geq \eps/8$. For $j \geq 1$ it holds
\begin{equation} \label{eq:q}
p(R_{j}') \geq (\eps/8)\left(1+ \eps/8 \right)^{j-1}.
\end{equation}
\noindent Consequently the number of intervals in $\partit Q'$ is bounded
by $O(\log(1/\epsilon)/\epsilon)$.

Now we bound the number of intervals in $\partit Q''$.
We consider the value of $\prod_{I_i \in \partit Q} \frac{p(a_{i+1})}{p(a_{i})}$:
$$
\frac{p(a_{|\partit Q| + 1})}{p(a_1)} = \littleprod_{I_i \in \partit Q} \frac{p(a_{i+1})}{p(a_{i})} = \littleprod_{I_i \in \partit Q''} \frac{p(a_{i+1})}{p(a_{i})}
\cdot \littleprod_{I_i \in \partit Q'} \frac{p(a_{i+1})}{p(a_{i})}.
$$
Since $p(a_{|\partit Q| + 1}) \leq 1$ and $p(a_1) \geq \eps/(4n)$,
the above is at most $4n/\eps$; using Lemma~\ref{lem:prod},
we get that
$$
\littleprod_{I_i \in \partit Q''} \frac{p(a_{i+1})}{p(a_{i})} \leq (4n/\eps) \cdot (8/\eps) = (32n/\eps^2).
$$
\noindent On the other hand, for every $I_i \in \partit Q''$ we have
that $\frac{p(a_{i+1})}{p(a_i)} \geq (1+\eps/8)$.
Consequently there can be at
most $O((1/\eps) \log(n/\epsilon))$ intervals in $\partit Q''$,
and the proof is complete.
\end{proof}

It remains only to show that $\partit P$ is actually a flat
decomposition of $p$:
\begin{theorem}
\label{thm:mhr-dec}
Algorithm \textsc{Decompose-MHR} outputs a partition $\partit P$
of $[n]$ that is $(p, \eps, \log(n/\eps)/\eps))$-flat.
\end{theorem}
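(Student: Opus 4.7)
The plan is to establish both defining conditions of a $(p,\eps,O(\log(n/\eps)/\eps))$-flat decomposition: the cardinality of $\partit P$ and the total variation bound. The cardinality bound is immediate from Lemma~\ref{lem:mhr_intervals}, which guarantees that step~4 adds at most $O(\log(n/\eps)/\eps)$ intervals to $\partit Q$; appending the three intervals $I, I', I''$ does not change the order of magnitude.

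For the total variation bound, I would decompose
\[
\dtv(p, p^{\flattened(\partit P)}) = \littlesum_{I^* \in \partit P} \dtv(p^{I^*}, p^{\flattened(I^*)})
\]
and bound each summand according to its interval type. \emph{Prefix $I''$:} by step~3, every $j \in I''$ satisfies $p(j) < \eps/(4n)$, so both $p(j)$ and $p^{\flattened(I'')}(j) = p(I'')/|I''|$ are at most $\eps/(4n)$, yielding a contribution of at most $|I''|\cdot \eps/(4n)/2 \le \eps/8$. \emph{Rightmost pieces $I, I'$:} by \textsc{Right-Interval}, each is either a singleton (contributing $0$) or has $p$-mass at most $\eps/8$, and the generic bound $\dtv(p^{I^*}, p^{\flattened(I^*)}) \le p(I^*)$ then gives total contribution at most $\eps/4$. \emph{Middle pieces $\partit Q$:} the termination rule in step~4 forces every $p(j)$ with $j \in I'''$ to lie in the window $[p(i)/(1+\eps/8),\, (1+\eps/8) p(i)]$, so the multiplicative spread $\max_{j,k\in I'''} p(j)/p(k)$ is at most $(1+\eps/8)^2$. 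Applying Fact~\ref{obs:multi-uniform} with $\eta = (1+\eps/8)^2 - 1 \le \eps/3$ (for $\eps \le 1$) bounds the contribution of each $I'''$ by $\eta \cdot p(I''')$; summing yields at most $\eta \le \eps/3$. Adding all three cases gives $\dtv(p, p^{\flattened(\partit P)}) \le \eps/8 + \eps/4 + \eps/3 < \eps$.

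The main conceptual point — and the only place where the MHR hypothesis truly enters — is already isolated in Lemmas~\ref{lemma:mhr-monotone} and~\ref{lem:mhr_intervals}: without MHR one cannot control the subset $\partit Q'$ of $\partit Q$-intervals on which the initial value decreases, so the telescoping argument behind the $O(\log(n/\eps)/\eps)$ interval count would fail. The TV analysis itself is a routine case split once one recognizes that the algorithm deliberately treats separately (i) the left tail where $p$ is too small to support multiplicative windows, (ii) the right tail where the remaining mass is already within $\eps/8 + \eps/8$, and (iii) the bulk, where $(1+\eps/8)$-multiplicative closeness is enforced by construction and Fact~\ref{obs:multi-uniform} applies directly.
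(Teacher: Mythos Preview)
Your proof is correct and follows essentially the same route as the paper: invoke Lemma~\ref{lem:mhr_intervals} for the interval count, then bound the TV-contribution of $I,I',I''$ via their small mass and the contribution of each $I'''\in\partit Q$ via Fact~\ref{obs:multi-uniform} applied with multiplicative spread $(1+\eps/8)^2$. The only cosmetic differences are that you bound the $I''$ term pointwise (getting $\eps/8$) whereas the paper uses the cruder mass bound $p(I'')\le \eps/4$, and you forgo the paper's observation that $p(\cup \partit Q)\le 1-\eps/8$, accepting $\eta\le\eps/3$ in place of the paper's $\eps/4$; either way the pieces sum to less than $\eps$.
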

\begin{proof} 
Lemma \ref{lem:mhr_intervals} shows that $\partit P$ contains at most
$O(\log(n / \epsilon)/\epsilon)$ intervals, so
it suffices to argue that $\dtv \left( p,p^{\flattened(\partit P)} \right) \leq
\eps.$

We first consider the two rightmost intervals $I$ and $I'.$
If $|I|=1$ then clearly $\dtv \left( p^I , p^{\flattened(I)} \right)=0$, and if
$|I|>1$ then $p(I) \leq \eps/8$ and consequently
$\dtv \left( p^I , p^{\flattened(I)} \right) \leq \eps/8.$
Identical reasoning applies to $I'.$
For the leftmost interval $I''$, we have that
$p(I'') \leq \eps/4$, so
$\dtv (p^{I''} , p^{\flattened(I'')}) \leq \eps/4.$
Thus, so far we have shown that the contribution to $\dtv\left(p, p^{\flattened({\partit P})} \right)$ from $I \cup I' \cup I''$ is at most
$\eps/2.$

Now for each interval $I'''$ in $\mathcal{Q}$, we have
$$ \max_{i, j\in I'''} \frac{p(i)}{ p(j)} \leq
\left(1+ \eps/8\right)^2 = 1 +\eps/4 + \eps^2/64.$$
\noindent Since the total probability mass on intervals $I$ and $I'$ is at
least $\epsilon/8$ by Observation \ref{obs:intervalmass},
the total probability mass on intervals in $\partit Q$ is at
most $1-\eps/8$.  An easy calculation
using Observation \ref{obs:multi-uniform} shows that the total
contribution to $\dtv(p,p^{\flattened({\partit P})})$
from intervals in ${\cal Q}$ is at most $\eps / 4$,
and the theorem is proved.
\end{proof}

Applying Corollary~\ref{cor:main}, we get our main learning result
for mixtures of MHR distributions:

\begin{corollary} [see Theorem~\ref{thm:mhr-informal}]
\label{cor:learn-mhr-mixture}
Let $p$ be any $k$-mixture of MHR distributions over $[n]$.
There is an algorithm \textsc{Learn-MHR-Mixture}$(p,k,\eps,\delta)$
that draws $O(k\log(n/\eps)/\eps^4 + \log(1/\delta)/\eps^2)$ samples from
$p$ and with probability at least $1-\delta$ outputs a
distribution $h$ such that $\dtv(p,h) \leq \eps.$
Its running time is
$\tilde{O}((\log n)^2 \cdot (k\log(1/\eps)/\eps^4 + \log(1/\delta)/\eps^2))$
bit operations.
\end{corollary}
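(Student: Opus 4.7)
The plan is to derive the corollary as a direct instantiation of the general mixture-learning framework (Corollary~\ref{cor:main}), once we have the structural fact that every MHR distribution admits a small-bin flat decomposition. That structural fact is precisely the content of Theorem~\ref{thm:mhr-dec}, which was proved earlier in this section: every MHR distribution $q$ over $[n]$ is $(\eps,O(\log(n/\eps)/\eps))$-flat. Thus the class $\mathfrak{C}$ of all MHR distributions over $[n]$ fits the hypothesis of Corollary~\ref{cor:main}.

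Concretely, first I would apply Theorem~\ref{thm:mhr-dec} with accuracy $\eps/8$ in place of $\eps$, which shows that every MHR distribution over $[n]$ is $(\eps/8,t)$-flat for $t = O(\log(n/\eps)/\eps)$ (the factor of $8$ only changes the hidden constant). Then I would plug this value of $t$ into Corollary~\ref{cor:main}. Note that we do not need to invoke algorithm \textsc{Decompose-MHR} at runtime; the only use of Theorem~\ref{thm:mhr-dec} is as an existential structural guarantee that feeds the hypothesis of the mixture-learning framework. The actual learner is \textsc{Learn-Unknown-Decomposition}$(p,kt,\eps,\delta)$, which only ever draws samples from $p$ and runs \textsc{Construct-Decomposition} followed by \textsc{Learn-Known-Decomposition}.

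Substituting $t = O(\log(n/\eps)/\eps)$ into the sample bound $O(kt/\eps^3 + \log(1/\delta)/\eps^2)$ from Corollary~\ref{cor:main} yields $O(k\log(n/\eps)/\eps^4 + \log(1/\delta)/\eps^2)$, which matches the claimed sample complexity. For the runtime, Corollary~\ref{cor:main} gives $\tilde{O}(\log(n)\cdot(kt/\eps^3 + \log(1/\delta)/\eps^2))$; multiplying out the extra $\log(n/\eps) = O(\log n + \log(1/\eps))$ factor and absorbing the $\log n$ pieces into the $\tilde{O}(\cdot)$ and one extra $\log n$ factor gives the stated $\tilde{O}((\log n)^2 \cdot (k\log(1/\eps)/\eps^4 + \log(1/\delta)/\eps^2))$ runtime. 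By Corollary~\ref{cor:main}, with probability at least $1-\delta$ the output hypothesis $h$ satisfies $\dtv(p,h)\leq\eps$, which is exactly the conclusion we need.

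There is essentially no hard step here: all of the technical work was already done in proving Theorem~\ref{thm:mhr-dec} (for the structural decomposition of a single MHR distribution) and in establishing Corollary~\ref{cor:main} (for packaging the algorithm). The only mild care required is to verify the arithmetic: one must track the factor-of-$8$ slack in the ``$(\eps/8,t)$-flat'' hypothesis of Corollary~\ref{cor:main} against the accuracy parameter of Theorem~\ref{thm:mhr-dec}, and confirm that the composition of the two $\log$ factors in the runtime indeed collapses into the stated $(\log n)^2$ form. Neither is a genuine obstacle, so the proof will be a very short chain of two citations plus an arithmetic substitution.
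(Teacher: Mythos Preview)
Your proposal is correct and matches the paper's approach exactly: the paper simply states that the corollary follows by ``Applying Corollary~\ref{cor:main}'' after having established via Theorem~\ref{thm:mhr-dec} that every MHR distribution is $(\eps,O(\log(n/\eps)/\eps))$-flat. Your write-up is in fact more explicit than the paper's one-line justification, including the $\eps/8$ rescaling and the runtime arithmetic, both of which are handled correctly.
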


\smallskip

\noindent
{\bf Lower bounds.}
By adapting a lower bound of~\cite{Birge:87} (for monotone distributions over a continuous interval)
it can be shown that for $\eps \geq 1/n^{\Omega(1)}$,
any algorithm for learning a monotone distribution over $[n]$
to accuracy $\eps$ must use $\Omega(\log(n)/\eps^3)$
samples.  We may consider a uniform mixture of $k$
component distributions
where the $i$-th distribution in the mixture is supported on
and monotone non-decreasing over $[1+(i-1)n/k,in/k]$.
Each component distribution is MHR (over the entire domain).
The same argument as in the log-concave case implies that,
for $k \leq n^{1-\Omega(1)}$ and $\eps \geq 1/n^{\Omega(1)}$,
any algorithm for learning a mixture of $k$ MHR distributions to
accuracy $\eps$ must use $\Omega(k \log(n)/\eps^3)$ samples.

\section{Learning mixtures of unimodal and $t$-modal distributions}
\label{sec:unimodal}

In this section we apply our general method from Section~\ref{sec:main}
to learn mixtures of unimodal (and, more generally, $t$-modal)
distributions over $[n].$  Here our task is quite easy because
of a result of L. Birg\'{e} \cite{Birge:87b}
which essentially
provides us with the desired flat decompositions.
\footnote{We note that Birg\'{e}'s structural result was obtained
as part of an efficient learning algorithm for monotone distributions;
Birg\'{e} subsequently gave an efficient learning algorithm
for unimodal distributions \cite{Birge:97}.  However, we are not aware
of work prior to ours on learning \emph{mixtures} of unimodal or
$t$-modal distributions.}

We begin by defining unimodal and $t$-modal distributions over $[n]$:

\begin{definition}
A distribution $p$ over $[n]$ is \emph{unimodal}
if there exists $i \in [n]$ such that $p$ is non-decreasing over $[1, i]$ and
non-increasing over $[i, n]$.  For $t > 1$, distribution $p$ over
$[n]$ is \emph{$t$-modal} if there is a partition of $[n]$ into $t$
intervals $I_1,\dots,I_t$ such that the sub-distributions $p^{I_1},
\dots,p^{I_t}$ are unimodal.
\end{definition}


By adapting a construction of Birg\'{e} (proved in \cite{Birge:87b}
for distributions over the continuous real line) to the discrete
domain $[n]$, \cite{DDSVV13:soda} established the
following:



\begin{theorem} [{\cite[Theorem~5]{DDSVV13:soda}}] \label{thm:birge}
Let $p$ be any monotone distribution (either non-increasing
or non-decreasing) over $[n]$.
Then $p$ is $(\eps,O(\log(n)/\eps))$-flat.
%
%
\end{theorem}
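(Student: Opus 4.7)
The plan is to follow the classical Birge-style construction: by symmetry (reversing the indexing of $[n]$), I may assume $p$ is non-increasing, and then exhibit a partition $\mathcal{I}$ of $[n]$ into $O(\log(n)/\eps)$ intervals of geometrically growing lengths whose flattening approximates $p$ to within $\eps$ in total variation distance. The non-decreasing case then follows by the obvious symmetric construction.

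The partition I would use is $\mathcal{I} = \{I_1, \dots, I_t\}$ where, with $\eta = \Theta(\eps)$, each interval $I_k = [a_k, b_k]$ starts immediately after $I_{k-1}$ and has length roughly $(1+\eta)^{k-1}$ (with integer rounding, and the last interval truncated to fit inside $[n]$). Because the cumulative length must reach $n$, the total number of intervals satisfies $t = O(\log_{1+\eta}(n)) = O(\log(n)/\eps)$. The structural property I rely on later is the geometric growth bound $|I_k| \le (1+\eta)\,|I_{k-1}|$ for $k \ge 2$.

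For the error analysis, I exploit a simple local observation: if $p$ is non-increasing on $I = [a,b]$, then every $p(i)$ with $i \in I$ and the flattened value $c = p(I)/|I|$ both lie in $[p(b), p(a)]$, so the pointwise deviation is at most $p(a) - p(b)$ and
\[
\| p^I - p^{\flattened(I)} \|_1 \le |I|\,\bigl(p(a) - p(b)\bigr).
\]
Summing over $\mathcal{I}$ yields $2\,\dtv(p, p^{\flattened(\mathcal{I})}) \le \sum_{k=1}^t |I_k|\,(p(a_k) - p(b_k))$. To bound this by $O(\eps)$, I combine two monotonicity-plus-growth inequalities: for $k \ge 2$, monotonicity gives $p(a_k) \le p(i)$ for every $i \in I_{k-1}$, hence $|I_{k-1}|\,p(a_k) \le p(I_{k-1})$ and therefore $|I_k|\,p(a_k) \le (1+\eta)\,p(I_{k-1})$; symmetrically, $|I_k|\,p(b_k) \ge p(I_{k+1})/(1+\eta)$ for $k \le t-1$. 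Subtracting these bounds and summing, the two telescoping sums $\sum_k p(I_{k-1})$ and $\sum_k p(I_{k+1})$ both collapse to at most $1$, leaving a weighted difference equal to $\bigl((1+\eta)^2 - 1\bigr)/(1+\eta) = O(\eta)$, plus lower-order boundary terms.

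The main (mild) obstacle is bookkeeping the boundary contributions. When $\eta$ is small, integer rounding forces an initial prefix of singleton intervals; these contribute zero flattening error, yet they break the clean Abel-summation identity because passing from a singleton $I_{k-1}$ to a non-singleton $I_k$ can violate the exact geometric ratio. The standard workaround is to drop the singleton prefix from the sum outright (it contributes nothing) and apply the telescoping to the remaining intervals, absorbing the $O(1)$ fluctuation at the boundary between the two regimes into the overall constant. Choosing $\eta = c\eps$ for a small enough constant $c > 0$ then produces $\dtv(p, p^{\flattened(\mathcal{I})}) \le \eps$ together with $|\mathcal{I}| = O(\log(n)/\eps)$, which is exactly the $(\eps, O(\log(n)/\eps))$-flatness asserted by the theorem.
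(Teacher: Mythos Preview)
The paper does not include its own proof of this statement; it quotes the result from \cite{DDSVV13:soda}, which in turn discretizes Birg\'e's geometric-interval construction from \cite{Birge:87b}. Your plan is exactly that construction, so at the level of approach you match the cited source.

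There is, however, a genuine gap in your telescoping step. You write that $\sum_k p(I_{k-1})$ and $\sum_k p(I_{k+1})$ ``both collapse to at most $1$, leaving a weighted difference $((1{+}\eta)^2{-}1)/(1{+}\eta)=O(\eta)$,'' but $(1{+}\eta)A-\tfrac{1}{1+\eta}B$ with only $A,B\le 1$ is \emph{not} $O(\eta)$: the shortfall $1-B$ reappears additively as $+\tfrac{1}{1+\eta}(1-B)$. If you restrict to the non-singleton block $I_{s+1},\dots,I_t$ and telescope there, the residuals are of order $p(I_{s+1})+p(I_{s+2})$, and your phrase ``$O(1)$ fluctuation absorbed into the overall constant'' does not explain why these are $O(\eta)$ rather than $\Theta(1)$. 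The missing ingredient is that the singleton prefix has length $s=\Theta(1/\eta)$, so monotonicity together with $\sum_{i\le s+1}p(i)\le 1$ forces $p(s{+}1)\le 1/(s{+}1)=O(\eta)$; this single bound simultaneously controls the direct contribution $|I_{s+1}|\,p(a_{s+1})$ of the first non-singleton interval and the telescoping residuals $p(I_{s+1}),p(I_{s+2})$, after which your argument goes through. A related rounding issue you should also address: $|I_k|/|I_{k-1}|$ can exceed $1{+}\eta$ by $O(1/|I_{k-1}|)$, but by the same observation this is harmless once $|I_{k-1}|\ge 1/\eta$, which is the only regime that carries nonzero error.
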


We note that it can be shown (using the same construction that is used
in the $\Omega(\log(n)/\eps^3)$ sample complexity lower bound of
\cite{Birge:87} for learning monotone distributions) that
$O(\log(n)/\eps)$ is the best possible
bound for the number of intervals required in
Theoorem~\ref{thm:birge}.

An immediate consequence of Theorem~\ref{thm:birge} is that any
unimodal distribution over $[n]$ is $(\eps,O(\log(n)/\eps))$-flat, and
any $t$-modal distribution over $[n]$ is $(\eps,O(t \cdot \log(n)/\eps))$-flat.
Using Corollary~\ref{cor:main} we thus obtain the following results
for learning mixtures of unimodal or $t$-modal distributions:

\begin{corollary} [see Theorem~\ref{thm:unimodal-informal}]
\label{cor:mixture-unimidal}
For any $t \geq 1$,
let $p$ be any $k$-mixture of $t$-modal distributions over $[n]$.
There is an algorithm \textsc{Learn-Multi-modal-Mixture}$(p,k,t,\eps,\delta)$
that draws $O(kt\log(n)/\eps^4 + \log(1/\delta)/\eps^2)$ samples from
$p$ and with probability at least $1-\delta$ outputs a
distribution $h$ such that $\dtv(p,h) \leq \eps.$
Its running time is
$\tilde{O}(\log(n) \cdot (kt\log(n)/\eps^4 + \log(1/\delta)/\eps^2))$
bit operations.
\end{corollary}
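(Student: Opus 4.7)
\textbf{Proof plan for Corollary~\ref{cor:mixture-unimidal}.}
The plan is to show that any $t$-modal distribution over $[n]$ is $(\eps, O(t\log(n)/\eps))$-flat, and then invoke Corollary~\ref{cor:main} (our main mixture result) as a black box. The two ingredients fit together almost immediately once the flatness parameter is verified.

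First, I would observe that a $t$-modal distribution $q$ over $[n]$ is, by definition, a concatenation of $t$ unimodal sub-distributions on intervals $I_1,\dots,I_t$, and each unimodal sub-distribution can in turn be split at its mode into a non-decreasing piece followed by a non-increasing piece. Hence $q$ restricted (and suitably renormalized) on each of at most $2t$ sub-intervals $J_1,\dots,J_{2t}$ of $[n]$ is a monotone distribution. I would apply Theorem~\ref{thm:birge} to each of these $2t$ monotone pieces (viewed as a monotone distribution on the sub-domain $J_\ell$, which is a monotone distribution on at most $n$ points) to obtain, for each $\ell$, a partition $\partit P_\ell$ of $J_\ell$ into $O(\log(n)/\eps)$ intervals such that flattening the (normalized) monotone piece along $\partit P_\ell$ incurs variation distance at most $\eps$.

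Next, I would take $\partit P = \bigcup_\ell \partit P_\ell$, which is a partition of $[n]$ into $O(t\log(n)/\eps)$ intervals, and bound $\dtv(q,q^{\flattened(\partit P)})$. Writing $\alpha_\ell = q(J_\ell)$, and noting that flattening $q$ on $\partit P$ acts independently on each $J_\ell$, I get
\begin{equation*}
\dtv\bigl(q,q^{\flattened(\partit P)}\bigr) \;\le\; \sum_\ell \alpha_\ell \cdot \dtv\bigl(q^{J_\ell}/\alpha_\ell,\,(q^{J_\ell}/\alpha_\ell)^{\flattened(\partit P_\ell)}\bigr) \;\le\; \sum_\ell \alpha_\ell\cdot \eps \;\le\; \eps.
\end{equation*}
Thus $q$ is $(\eps, O(t\log(n)/\eps))$-flat, as desired. (The unimodal special case $t=1$ recovers Theorem~\ref{thm:unimodal-informal}.)

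Finally, I would plug this structural fact into Corollary~\ref{cor:main}. Taking $\mathfrak{C}$ to be the class of all $t$-modal distributions over $[n]$, the preceding paragraph (applied with accuracy parameter $\eps/8$) shows that $\mathfrak{C}$ is a class of $(\eps/8, t^\ast)$-flat distributions with $t^\ast = O(t\log(n)/\eps)$. Corollary~\ref{cor:main} then yields an algorithm that learns any $k$-mixture of such distributions to accuracy $\eps$ with confidence $1-\delta$ using $O(kt^\ast/\eps^3 + \log(1/\delta)/\eps^2) = O(kt\log(n)/\eps^4 + \log(1/\delta)/\eps^2)$ samples and running in $\tilde O(\log(n) \cdot (kt\log(n)/\eps^4 + \log(1/\delta)/\eps^2))$ bit-operations, matching the stated bounds. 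There is no real obstacle: the only slightly delicate point is the renormalization when applying Theorem~\ref{thm:birge} to sub-distributions, which is handled by the weighted-sum bound above.
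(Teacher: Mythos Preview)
Your proposal is correct and follows essentially the same approach as the paper: the paper simply states that ``an immediate consequence of Theorem~\ref{thm:birge} is that any unimodal distribution over $[n]$ is $(\eps,O(\log(n)/\eps))$-flat, and any $t$-modal distribution over $[n]$ is $(\eps,O(t \cdot \log(n)/\eps))$-flat,'' and then invokes Corollary~\ref{cor:main}. You have spelled out the ``immediate consequence'' in more detail (splitting into $2t$ monotone pieces, renormalizing, and recombining via the weighted-sum bound), but the strategy is identical.
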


\noindent {\bf Lower bounds.}
The lower bound arguments we gave for mixtures of
MHR distributions (which are based
on Birg\'{e}'s lower bounds for learning monotone distributions) apply
unchanged for mixtures of unimodal distributions, since every
distribution which is supported on and monotone non-decreasing over
$[1 + (i-1)n/k,in/k]$ is unimodal over $[n].$

\ignore{


}

\section{Conclusions and future work} \label{sec:conclusions}

This work introduces a simple general approach to learning mixtures of
``structured'' distributions over discrete domains. We illustrate the usefulness of our approach
by showing it yields nearly optimal algorithms for learning mixtures of natural and well-studied
classes (log-concave, MHR and unimodal) and in the process we establish novel structural properties
of these classes.

Are there any other natural distribution classes for which our general framework is applicable?
We suspect so. At the technical level, the linear dependence on the parameters $k$ and $t$ in the sample complexity of Theorem~\ref{thm:general}
is optimal (up to constant factors). It would be interesting to improve the dependence on $1/\eps$ from cubic down
to quadratic (which would be best possible) with an efficient algorithm.

\bibliography{allrefs}
\bibliographystyle{alpha}

\appendix

\section{Proof of Proposition~\ref{obs:generalmixtures}}
\label{ap:generalmixtures}
At a high level, the algorithm $A'$ works by drawing a large
set of samples from the target mixture and trying all possible
ways of partitioning the sample into $k$ disjoint subsamples.  For each
partition of the sample it runs
algorithm $A$ over each subsample and combines the resulting hypothesis
distributions (guessing the mixture weights) to obtain a hypothesis mixture
distribution.  Finally, a ``hypothesis testing'' procedure is used to identify
a high-accuracy hypothesis from the collection of all
hypotheses distributions obtained in this way.

More precisely, let $p$ denote the unknown target $k$-mixture
of distributions from $\mathfrak{C}$.
Algorithm $A'$ works as follows:

\begin{enumerate}

\item Draw a sample $S$ of
 $M = \tilde{O}(k/\eps) \cdot m(n,\eps/20) \cdot \log(5k/\delta)$
samples from $p$.

\item For each possible way of partitioning $S$ into $k$
disjoint subsamples $\bar{S}=(S_1,\dots,S_k)$ such that each
$|S_i| \geq m(n,\eps/20) \cdot \log(5k/\delta)$, run algorithm~$A$
a total of $k$ times, using $S_i$ as the input sample for the $i$-th run,
to obtain hypothesis distributions $h^{\bar{S}}_1,\dots,h^{\bar{S}}_k$.
For each vector $\mu = (\mu_1,\dots,\mu_k)$ of non-negative mixing weights
that sum to 1 and satisfy $\mu_i = $ (integer)$\cdot \eps/(20k)$,
let $h^{\bar{S}}_\mu$ be the mixture distribution $\sum_{i=1}^k
\mu_i h^{\bar{S}}_i.$

\item Draw $M' = O(M \log k + k \log(k/\eps)) \cdot \log(5/\delta) / \eps^2$
samples from $p$ and use them to
run the ``hypothesis testing'' routine described in Lemma~11 of
\cite{DDS12stoc} over all hypotheses $h^{\bar{S}}_\mu$ obtained in the
previous step.  Output the hypothesis distribution that this
routine outputs.

\end{enumerate}

We now proceed with the analysis of the algorithm.
Let $p = \sum_{i=1}^k \kappa_i p_i$ denote the target $k$-mixture,
where $\kappa_1,\dots,\kappa_k$ are the mixing weights and
$p_1,\dots,p_k$ are the components.  Without loss of generality
we may assume that $i=1,\dots,\ell$ are the components
such that the mixing weights $\kappa_1,\dots,\kappa_\ell$
are at least $\eps/(20k).$
A standard ``balls in bins'' analysis (see \cite{NS:60})
implies that with probability at least $1 - \delta/5$
the sample $S$ contains at least $m(n,\eps/20) \cdot \log(5k/\delta)$
draws from each component $p_1,\dots,p_\ell$; we assume going forth
that this is indeed the case.  Thus there will be some partition
$\bar{S}=(S_1,\dots,S_k)$ which is such that each $S_i$ with
$1 \leq i \leq \ell$ consists entirely of samples drawn from the component
$p_i$.  For this $\bar{S}$, we have that with failure probability at most $
(\delta/(5k)) \cdot k \leq \delta/5$, each hypothesis distribution
$h^{\bar{S}}_i$ for $1 \leq i \leq \ell$ satisfies
$\dtv(p_i,h^{\bar{S}}_i) \leq \eps/20.$
Now let $\mu^\star$ denote the vector of hypothesis mixing weights
(as described in Step~2) that has $|\mu^\star_i - \kappa_i| \leq
\eps/(20k)$ for all $i=1,\dots,k.$ It
is not difficult to show that the hypothesis mixture distribution
$h^\star = h^{\bar{S}}_{\mu^\star}$ satisfies $\dtv(h^\star,p) \leq
3 \eps / 20 < \eps/6$, where $\eps/20$ comes from the
errors $\dtv(p_i,h^{\bar{S}}_i)$ for $i \leq \ell$,
$\eps/20$ comes from the inaccuracy in the mixing weights, and
$\eps/20$ comes from the (at most $k$) components $p_j$ with $j >
\ell$ that each have mixing weight at most $\eps/(20k).$

Thus we have established that there is at least one hypothesis
distribution $h^\star$ among the $h^{\bar{S}}_\mu$'s that has $\dtv(p,h^\star)
\leq \eps/6.$  There are at most $N = k^M \cdot (20k/\eps)^k$
hypotheses $h^{\bar{S}}_\mu$ generated in Step~2, so the
algorithm of Lemma~11 of \cite{DDS12stoc} requires $O(\log N)
\log(5/\delta) /\eps^2 \leq M'$ samples, and with probability
at least $1 - \delta/5$ it outputs a hypothesis distribution $h$
that has $\dtv(p,h) \leq \eps.$
The overall probability of outputting an $\eps$-accurate hypothesis
is at least $1-\delta$, and the proposition is proved.

\end{document}